\runningauthor{Hauberg, Freifeld, Boesen Lindbo Larse, Fisher III, and Hansen}
\renewcommand{\vec}[1]{\mathbf{#1}}
\DeclareMathOperator{\Exp}{Exp}
\DeclareMathOperator{\Log}{Log}
\DeclareMathOperator{\dist}{dist}
\DeclareMathOperator*{\argmin}{arg\,min}
\newcommand{\T}{^{\top}}
\newtheorem{lem}{Lemma}
\newenvironment{proof}[1][Proof]{\begin{trivlist}
\item[\hskip \labelsep {\bfseries #1}]}{\end{trivlist}}
\newcommand{\qed}{\nobreak \ifvmode \relax \else
      \ifdim\lastskip<1.5em \hskip-\lastskip
      \hskip1.5em plus0em minus0.5em \fi \nobreak
      \vrule height0.75em width0.5em depth0.25em\fi}
\newcommand{\RR}{\ensuremath{\mathbb{R}}}
\newcommand{\Rtwo}{\ensuremath{\RR^2}}
\newcommand{\Rd}{\ensuremath{\RR^d}}
\newcommand{\set}[1]{\ensuremath{{\{#1\}}}}
\bmdefine\bSigma{\Sigma}
\bmdefine\bzero{0}
\bmdefine\bbeta{\beta}
\bmdefine\bphi{\phi}
\bmdefine\btheta{\theta}
\bmdefine\bxi{\xi}
\bmdefine\bn{n}
\bmdefine\bv{v}
\bmdefine\bx{x}
\bmdefine\bA{A}
\bmdefine\bB{B}
\newcommand{\Ncal}{\mathcal{N}}
\newcommand{\Pcal}{\mathcal{P}}
\newcommand{\Vcal}{\mathcal{V}}
\newcommand{\phiTheta}{\bphi^\btheta}
\newcommand{\phiThetaAt}[2]{\phiTheta(#1,#2)}
\newcommand{\Vpa}{\Vcal_{\mathrm{PA}}}
\newcommand{\SigmaPA}{\bSigma_{\mathrm{PA}}}
\begin{document}

  \twocolumn[
%     \aistatstitle{Dreaming More Data: \\
%       Class-dependent Distributions over Diffeomorphisms for Learned Data Augmentation}
\runningtitle{Class-dependent Distributions over Diffeomorphisms for Learned Data Augmentation}
\aistatstitle{Dreaming More Data: 
      Class-dependent Distributions \\ over Diffeomorphisms for Learned Data Augmentation}
    \aistatsauthor{S{\o}ren Hauberg \And Oren Freifeld \And Anders Boesen Lindbo Larsen}
    \aistatsaddress{Section for Cognitive Systems\\Technical University of Denmark\\\texttt{sohau@dtu.dk} \And
      Sensing, Learning and\\ Inference Group\\MIT CSAIL\\\texttt{freifeld@csail.mit.edu} \And
      Image Analysis and Computer \\ Graphics Section\\Technical University of Denmark\\\texttt{abll@dtu.dk}}
    \aistatsauthor{John W.\ Fisher III \And Lars Kai Hansen}
    \aistatsaddress{Sensing, Learning and Inference Group\\MIT CSAIL\\\texttt{fisher@csail.mit.edu} \And
      Section for Cognitive Systems\\Technical University of Denmark\\\texttt{lkai@dtu.dk}}
  ]

  \begin{abstract}
  \emph{Data augmentation} is a key element in training high-dimensional models.
  In this approach, one synthesizes new observations by applying pre-specified transformations
  to the original training data; e.g.~new images are formed by rotating old ones.
  Current augmentation schemes, however, rely on manual specification of the applied transformations, 
  making data augmentation
  an implicit form of feature engineering. With an eye towards true \emph{end-to-end
  learning}, we suggest \emph{learning the applied transformations on a per-class basis}.
  Particularly, we align image pairs within each class under the assumption that the
  spatial transformation between images belongs to a large class of diffeomorphisms.
  We then learn a class-specific probabilistic generative models of the transformations in a Riemannian
  submanifold of the Lie group of diffeomorphisms. We demonstrate significant
  performance improvements in training deep neural nets over manually-specified
  augmentation schemes. 
  Our code and augmented datasets are available online.
\end{abstract}

  \section{Introduction}\label{sec:intro}
  Variation in classification datasets typically reflects a variety of physical processes.
  Some are related to label differences and others inject variability which is
  irrelevant to the labeling. In visual object recognition we face different objects,
  but also different locations, postures, variable illumination conditions, and so forth.
  Mathematically this means that the image label is invariant to
  certain transformations of the image.
  Knowledge of these transformations can be used to drastically improve classification performance,
  either by picking invariant features or by augmenting the dataset with new
  observations generated by applying random transformations to the training data.
  This form of \emph{data augmentation} is a rather old idea~\cite{baird1992document,simard1992tangent},
  which is still considered part of ``best practices''~\cite{simard2003best} for many tasks.
  Data augmentation was a critical component in the landmark image-classification
  paper by Krizhevsky et al.~\cite{HintonImageNet2012}:
  \emph{``Without this scheme, our network suffers from substantial overfitting,
  which would have forced us to use much smaller networks''}.
  
  In practice, data augmentation is a manual process, where a human specifies
  a small set of transformations for which an image classification task is believed to be invariant; 
  for image classification tasks, these are most commonly chosen to be simple linear transformations such as 
  \emph{translations}, \emph{rotations} and \emph{scaling}. In essence, this manual process is
  a form of \emph{feature engineering}, where the data itself 
  plays only a secondary role. 
  Moreover, as current augmentation schemes are manually specified, the same scheme
  is often used within all classes. This is potentially troublesome, e.g.\ while
  full \emph{rotational invariance} is poorly suited when differentiating images
  of digits ``6'' and ``9'', it may still help differentiate other digits.
  These concerns suggest
  that it is beneficial to \emph{learn augmentation schemes}
  from training data rather than relying on manual specification, thus, getting
  closer to an \emph{end-to-end learning scheme}.
  
  We consider an unsupervised approach for learning image deformations
  that naturally appear within different classes. 
  For each class, we align images in a pairwise fashion, assuming the latent
  spatial transformation between them is a ($C^1$) diffeomorphism.
  This gives a set of diffeomorphisms represented
  on a finite-dimensional nonlinear Riemannian manifold. We show that our data has a simple
  closed-form mean on this manifold, and approximate the distribution of
  diffeomorphisms with a per-class multivariate normal distribution in
  the tangent space at the mean. We then generate new data
  by sampling first an image from the training data, and second a diffeomorphism
  from the learned distribution.
%  (as we will explain, a side effect of this two-step generative procedure is that it also captures certain non-diffeomorphic deformations).
  Applying the diffeomorphism to the image gives a new data point.
  With this data we train both a multilayer perceptron and a convolutional neural net.
  In both cases we observe a significant
  improvement over manually specified augmentation schemes. This is particularly
  evident for small datasets.
  Both the code and generated data are available 
online.\footnote{\url{http://www2.compute.dtu.dk/~sohau/augmentations/}}
    
  \begin{figure*}[t]
    \includegraphics[width=\textwidth]{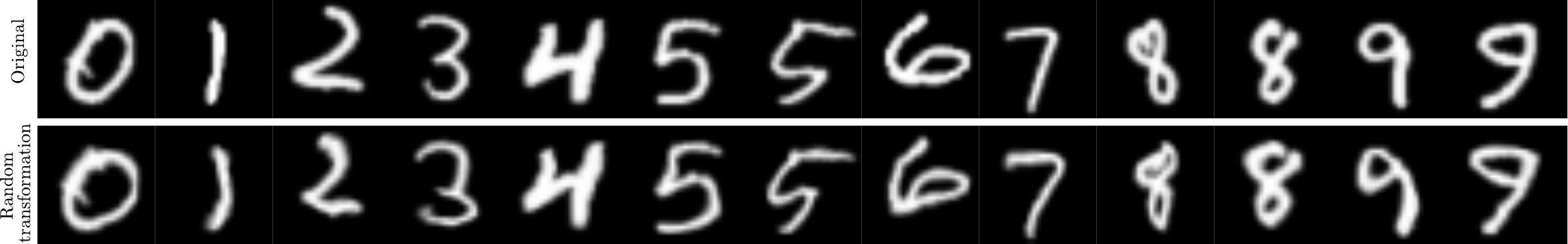}
    \caption{Example transformations.
      \emph{Top:} original images.
      \emph{Bottom:} random transformations from our model applied to the original images.}
    \label{fig:examples}
  \end{figure*}

  \section{Background and Related Work}\label{sec:background}
  We focus the discussion on image classification, though the fundamental ideas
  generalize to other domains.
  %Let $T^{\btheta}: \R^2 \rightarrow \R^2$ 
    Let $T^{\btheta}$ 
  be a spatial transformation, parametrized by $\btheta$,
%   acting on the input data, 
  such that if an image $\bx$ has label $y$, then
 % $T^{\btheta} \circ \bx$ 
%   $\bx \circ T^{\btheta} $ has the same label. 
  so does  $\bx \circ T^{\btheta} $.
  For example, translating an
  object in an image does not change its label. If $T^{\btheta}$ is known
  then this conveys strong prior information, which may improve a classifier.
  This is a key consideration when training deep models, cf.~\cite{goodfellow2009measuring}.

  One strategy for encoding knowledge of $T^{\btheta}$ is to design features that are
  invariant to $T^{\btheta}$, 
%   i.e.\ $\mathcal{F}(\bx) = \mathcal{F}(T^{\btheta} \circ \bx)$,
  i.e.~$\mathcal{F}({\bx}) = \mathcal{F}({\bx} \circ T^{\btheta})$,
  where $\mathcal{F}$ denotes the feature map. Equivalently, invariant kernels~\cite{chapelle:nips:2001} 
  can be derived. This idea lies at the heart of
  Grenander's \emph{Pattern Theory}~\cite{grenander:book:1993}, which encodes invariances
  by describing the data itself as transformations acting on reference objects.
  Alternatively, all instantiations of $T^{\btheta}$ may be applied to each observation
  to produce orbits~\cite{graepel:nips:2004, liao:nips:2013} which can then be matched.
  While mathematically elegant, these approaches tend to be laborious and computationally
  expensive, which hinder their applicability. Moreover, the resulting transformations
  generally have limited expressiveness (partly to keep computations tractable,
  the transformation family is usually restricted to be simple; e.g., affine transformations).
  
  To counter this, many approaches are only approximately invariant to 
  the transformations. For example, scale invariance is approximated via image-pyramid
  representations~\cite{kanazawa:arxiv:2014, farabet:pami:2013}, assuming
  a known set of scales which the classifier must be invariant to.
  The classic \emph{Tangent Prop}~\cite{simard1992tangent}
  locally linearizes $T^{\btheta}$ with finite differences, and forces the back-propagated
  gradient of a neural net to respect the invariance. The linearization, however,
  implies that invariance can 
%   only be learned 
  be learned  
  with respect to only infinitesimal 
  transformations. General linear invariances 
%   has also been studied 
  are also used 
  for restricted
  Boltzmann machines~\cite{sohn:icml:2012, kivinen:icann:2011}, but again the
  linearity implies that the invariance is only infinitesimal.
  
  A widely-used approach is to synthesize new observations by applying
  the known $T^{\btheta}$ to the training data, and then train a classifier on
  the augmented data set~\cite{baird1992document,simard2003best,HintonImageNet2012,loosli:lskm:2007}.
  Eigen et al.~\cite{NIPS2014_5539} consider multiple invariances including length scaling,
  rotation, translation, horizontal flip and color scaling in a deep multiscale network
  for single-image depth estimation. In work on speech recognition, Jaitly and Hinton~\cite{jaitly2013vocal} apply
  \emph{Vocal Tract Normalization (VTLN)}
  as a way to artificially transform utterances of one speaker to the voice of another.
  The transform is applied in the spectral domain and corresponds
  in the VTLN model to a simple parameterized warp of the frequency axis~\cite{jaitly2013vocal}.
  In astrophysics, in the context of galaxy redshift prediction, Hoyle et al.~\cite{Hoyle:2015ada}
  augmented data by use of redshift models.
  The \emph{Infinite MNIST} data set~\cite{loosli:lskm:2007}
  was 
  %is \OREN{I usually prefer present simple in this section, but it fell odd to say that a dataset *is* generated}
  generated by considering horizontal and vertical translations, rotations,
  horizontal and vertical scalings, hyperbolic transformations, and random Gaussian
  perturbations. Using the ideas
  of Tangent Prop, infinitesimal transformations are then applied to the training
  data to produce a total of 8 million observations. This is the current most elaborate
  augmentation strategy.
  
  Semi-supervised learning~\cite{chapelle2006semi} in data with a neighborhood
  graph can be related to model-based data augmentation,
  the modeling assumption
  being that neighbors share the same label; for examples in text classification, see~\cite{lu2006enhancing}. 
  In an interesting twist on data augmentation, Dosovitskiy et al.~\cite{dosovitskiy2013unsupervised}
  create various augmented data, and learn features by training networks to
  \emph{distinguish between} the differently augmented datasets.

  The above approaches all assume that the transformations $T^{\btheta}$ are known \emph{a priori},
  which is generally not the case. The specification of \emph{which} transformation
  should be considered is an implicit form of feature engineering, and it is worth
  investigating whether the transformations themselves should be learned from data.
  \emph{This is the objective of the current manuscript}.
  
  Transformations between observations can also be used to define a similarity
  measure between observations by quantifying the ``size'' of the transformation.
  This was used with a $k$-nearest neighbors classifier by Amit et al.~\cite{Amit:ijcv:2007}
  to create the current state-of-the-art MNIST classifier relying on only a low
  number of training images. Similarly, Duchenne et al.~\cite{duchenne2011graph}
  use the size of transformations to build a kernel for SVM classification.
  To speed up the estimation of the transformations, both Winn \& Jojic~\cite{winn2005locus}
  and Hariharan et al.~\cite{hariharan2014detecting} learn models of transformations to guide the search.

  \section{Diffeomorphisms: Representation, Inference, and Learning}
To learn a model $p(T^{\btheta} | y)$ of the transformations within each class,
we need a well-behaved and sufficiently-expressive mathematical representation of transformations.
% To avoid a manual specification of the transformations of interest (such as rotations, translation, etc.),
% we focus our attention on the large class of first-order $\Rtwo \rightarrow \Rtwo$
% diffeomorphisms, i.e.\ we consider $T^{\btheta}$ that are smooth (with at least one derivative),
% and has a smooth inverse $(T^{\btheta})^{-1}$.
To avoid a manual specification of the transformations of interest (such as rotations, translations, etc.),
we focus our attention on a large class of first-order 
% $\Rtwo \rightarrow \Rtwo$
diffeomorphisms, i.e.\ we consider $T^{\btheta}$ that are once differentiable, 
and whose inverse, $(T^{\btheta})^{-1}$, exists and is differentiable as well. 

    \begin{figure*}
      \centering
      \includegraphics[width=\textwidth]{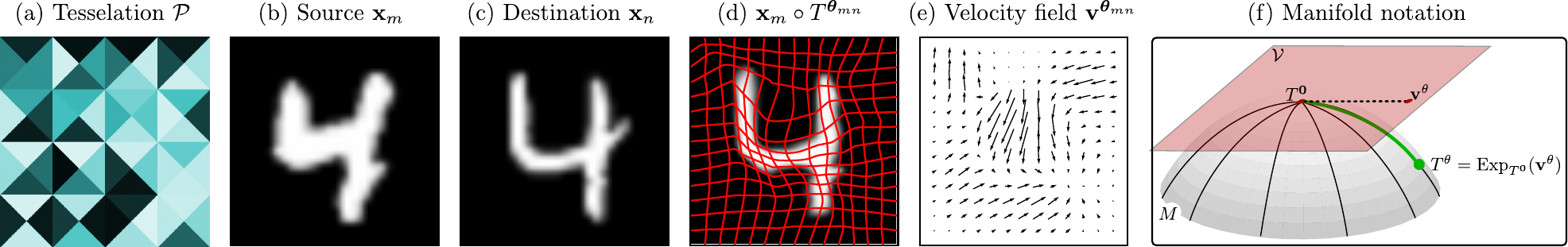}
      \caption{(a) An example tesselation.
        (b--c) Two images from the MNIST dataset.
        (d) The transformation between the two images.
        (e) The velocity field of the transformation.
        (f) Illustration of Riemannian concepts: $\Vcal$ is the tangent space
            of the manifold $M$ at the identity transformation;
            the exponential map take points from $\Vcal$ onto $M$.}
      \label{fig:alignment}
    \end{figure*}
    
  \subsection{Representing Diffeomorphisms}
    The space of \emph{all} diffeomorphisms is an infinite-dimensional Lie group \cite{arnold1989mathematical}.
    To reduce the implied computational burden when aligning images, we restrict
    our attention to a large finite-dimensional subset of this Lie group.
    %The space of \emph{all} diffeomorphisms is an infinite-dimensional Lie group \cite{arnold1989mathematical},
    %which is problematic for our purposes as it makes image alignment intractable. % (recall we want to align a large number of image pairs).
    %We, thus, opt to restrict our attention to a finite-dimensional subset of
    %this Lie group.
    Here we rely on recent developments in the image transformation literature~\cite{freifeld2015transform}:
    
    These developments specify a transformation through the integration of a \emph{velocity field
    with a simple structure} that, while being highly-expressive, leads to fast
    and accurate computations, thus rendering inference tractable.
    Let $\Omega\subset\Rtwo$ denote the image domain and let $\Pcal$ be a
    triangular tessellation of $\Omega$ (see Fig.~\ref{fig:alignment}a for an example).
    In what follows, an $\Omega \to \Rtwo$ velocity field is called
    \emph{piecewise-affine (PA)} if it is affine when restricted to each triangle
    of $\Pcal$. Let $\Vcal$ denote the space of all \emph{Continuous Piecewise-Affine (CPA)}
    $\Omega \to \Rtwo$ velocity fields that vanish on $\partial\Omega$, the boundary
    of $\Omega$:
    \begin{align}\label{eqn:Vcpa}
    \begin{split}
%       \Vcal = \Vcpabdry(\Omega;\Pcal) \triangleq \set{\bv^\btheta: \bv^\btheta \text{ is an } \Omega\to\Rtwo \text{ CPA map (w.r.t.~}\Pcal\text{) } \text{and} \,   
      \Vcal \triangleq \{&\bv^\btheta: \bv^\btheta \text{ is an } \Omega\to\Rtwo \text{ CPA map (w.r.t.~}\Pcal\text{)}  \\
               &\text{and} \, \bv^\btheta(\bxi)=\bzero_{2\times1}\,  \forall \bxi\in\partial\Omega
              \} \, .
    \end{split}
    \end{align}
    It can be shown~\cite{freifeld2015transform} that $\Vcal$ is a finite-dimensional linear space, where
    $d \triangleq \dim(\Vcal)$ is determined by how finely $\Pcal$ is tessellated; thus, $\Vcal$ is isomorphic to $\Rd$.
    In our experiments we use the tessellation shown
    in Fig.~\ref{fig:alignment}a. 
    This choice, together with the boundary condition, implies that $d = 50$
    (25 non-boundary vertices times 2 degrees of freedom in each one).
    %\OREN{Consider instead:
    %We have found this choice of $\Pcal$, which together with the boundary condition implies that $d = 50$, 
    %to strike a nice balance between computation time and expressiveness.}
    In Eq.~\ref{eqn:Vcpa}, the superscripted $\btheta$ is an element 
    of $\Rd$, and we say that $\btheta$ parametrizes $\Vcal$ in the sense 
    that any element of $\Vcal$ can be written as a linear combination of $d$
    orthonormal CPA fields with weights $\btheta$~\cite{freifeld2015transform}.

    A spatial transformation $T^\btheta:\Omega\to\Omega$ can then be derived by integrating
    a velocity field $\bv^\btheta \in \Vcal$ \cite{freifeld2015transform};
    in other words, the space of the resulting transformations, denoted by $M$, is given by
    \begin{align}
    \begin{split}
      M \triangleq \{ &T^\btheta:\Omega \to \Omega\,,\,T^\btheta:\bxi \mapsto \phiThetaAt{\bxi}{1}
        \text{ where } \\
        &\phiThetaAt{\bxi}{\cdot} \text{ solves Eq.~\ref{eqn:intEqn} for some }\bv^\btheta \in \Vcal \} \, ;
    \end{split}
    \end{align}
    \begin{align}
    \begin{split}
      \phiThetaAt{\bxi}{t} = \bxi + \int_{0}^{t} \bv^\btheta ( \phiThetaAt{\bxi}{\tau} ) \, d  \tau \, ,
      \\
      \phiThetaAt{\cdot}{\cdot}:\Omega\times\RR \to \Omega \, .
      \label{eqn:intEqn}
    \end{split}
    \end{align}
    It can be shown \cite{freifeld2015transform} that elements of $M$ are 
%     first-order 
    diffeomorphisms, that $M$ contains the identity
    transformation $T^{\bzero}:\bxi\mapsto\bxi$, and that $M$ is closed under inversion with
    $(T^{\btheta})^{-1} = T^{-\btheta}$.
    While these elements are obtained, via the solution of Eq.~\ref{eqn:intEqn},
    from CPA velocity fields, they are generally \emph{not} CPA themselves.
    Thus, elements of $M$ are called CPA-\emph{based} (CPAB) transformations \cite{freifeld2015transform}.

    It can further be shown that $M$ is a nonlinear
    $d$-dimensional connected Riemannian manifold \cite{freifeld2015transform},
    and that $\Vcal$ is a tangent space
    of $M$, where the identity transformation $T^{\bzero}$ is the point of
    tangency.    
    To build statistical models over $M$ we need mappings back and forth
    between the manifold and its tangent spaces \cite{pennec:jmiv:2006}.
    It can be shown that the restriction of the Lie group exponential map 
% % % % %     (which maps 
% % % % %     the space of all continuous velocity fields into the space of diffeomorphisms) \OREN{I commented this out, since I am not exactly sure about this sentence}
    to $\Vcal$ coincides with the mapping of $\bv^\btheta$ to $T^\btheta$ via
    the integral equation \eqref{eqn:intEqn}. 
    % \OREN{I moved it elsewhere, since what follows the ``Thus,'' does not actually follow from the previous sentence.
% % %     Thus, $\Vcal$ is a tangent space
% % %     of $M$, where the identity transformation $T^{\bzero}$ is the point of
% % %     tangency. 
    We 
%     may then 
    thus
    write $T^\btheta = \Exp_{T^{\bzero}} (\bv^\btheta)$,
    where 
%     $\Exp_{T^{\bzero}} (\cdot)$
    $\Exp_{T^{\bzero}}:\Vcal\to M$
    denotes the exponential mapping.
    Figure~\ref{fig:alignment}f illustrates the notation.
    It is worth noting that due to the special structure of CPA velocity fields, the exponential map has an efficient numerical
    implementation, which makes inference tractable \cite{freifeld2015transform}.
    Particularly, this enables a fast GPU implementation 
    that parallelizes the evaluation of $(\bxi,\btheta)\mapsto T^\btheta(\bxi)$  over multiple $\bxi$'s.
    We end this section with two remarks.
    First, as noted in~\cite{freifeld2015transform}, $\bxi\mapsto \phiThetaAt{\bxi}{t=1}$ should
    not be confused with (the non-diffeomorphism, parametric optical-flow-like representation) $\bxi\mapsto\bxi +\bv^\btheta(\bxi)$,
    the latter being only a \emph{Taylor approximation} of the former.
    Second, while here we used the transformations from~\cite{freifeld2015transform} due the reasons mentioned above,
    one may also consider other finite-dimensional spaces of diffeomorphisms~\cite{zhang:ipmi:2015,Arsigny:BIR:2006}. 
    
  \subsection{Estimating Transformations by Aligning Images}
    To estimate the transformations that naturally appear within a class, we
    align image pairs from said class. To avoid aligning all image pairs, we consider only the image pairs 
    where one image is among the $K=5$ nearest neighbors of the other.    
    Let $\bx_n$ and $\bx_m$ denote two such images. We then seek
    $T^{\btheta_{mn}} \in M$ such that  
%     $\bx_m \circ T^{-\btheta_{mn}} \approx \bx_n$. \OREN{This is wrong - it was my bad - sorry...}
     $\bx_m \circ T^{\btheta_{mn}} \approx \bx_n$.
    
        We adopt a Bayesian approach where our prior encodes
    smoothness. The posterior is given by  
    \begin{align}
%     T^{\btheta_{}}\sim
      p(T^{\btheta_{}} | \bx_m, \bx_n)
        &\propto p( \bx_m, \bx_n | T^{\btheta_{}})\, p(T^{\btheta_{}}).
    \end{align}
    For the likelihood we take a simple i.i.d.\ Gaussian model of the intensity
    differences between the destination image and warped one:
    \begin{align}
      \log p( \bx_m, \bx_n | T^{\btheta_{}})
       &= -\frac{\| \bx_m\circ T^{\btheta_{}} - \bx_n\|^2}{2\sigma^2} + \mathrm{const},
    \end{align}
    where $\sigma$ is user-specified. Other similarity measures,
    such as mutual information~\cite{viola1997alignment}, can be used as well.

    \begin{figure*}
      \centering
      \includegraphics[width=1.0\textwidth]{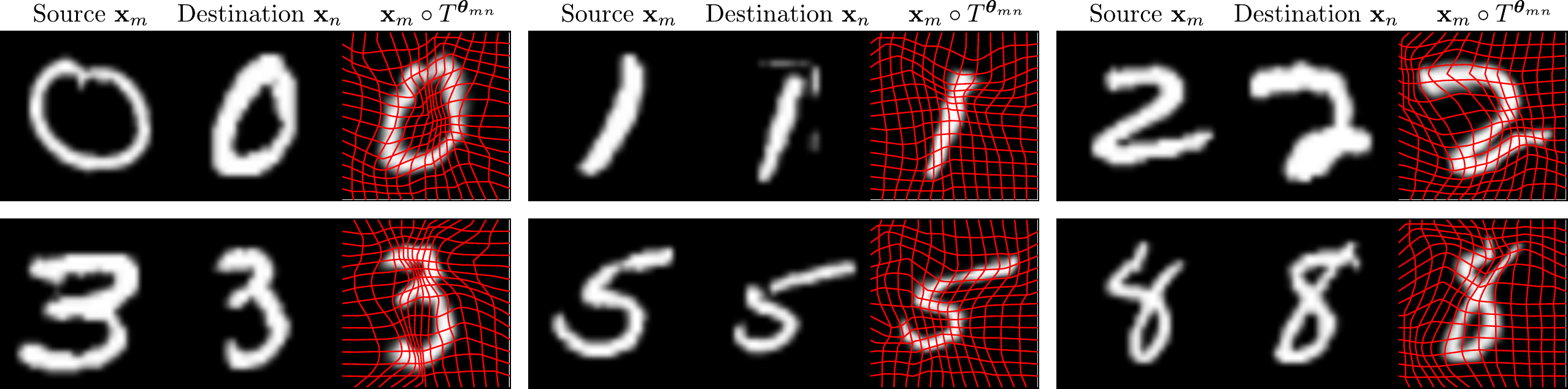}
      \caption{Example alignments.}
      \label{fig:align_examples}
    \end{figure*}
  
    Following~\cite{freifeld2015transform}, we construct the prior over velocity fields $\bv^\btheta \in \Vcal$, which
    induces a prior over transformations via the exponential map,
    $T^\btheta = \Exp_{T^{\bzero}}(\bv^\btheta)$.
    Let $\Vpa$ denote the linear space of (possibly-discontinuous, and without
    boundary constraints) PA $\Omega \to \Rtwo$ velocity fields. 
    The affine map associated with each triangle is given by a $2 \times 3$ matrix
    so $\dim(\Vpa) \triangleq D = 6 N_c$, where $N_c$ is the number of
    triangles in $\Pcal$. We use a standard \emph{squared-exponential}
    (aka.\ \emph{Gaussian}) kernel to define a zero-mean Gaussian distribution
    over $\Vpa$ such that the correlations between the affine maps associated
    with each pair of triangles decay with the distance between triangle centroids \cite{freifeld2015transform}.
    Let $\SigmaPA$ denote the covariance of that Gaussian. Note that $\Vcal$
    is a linear subspace of $\Vpa$ with $d = \dim(\Vcal) < D$.
    Let $\bB$ be the $D \times d$ orthogonal basis matrix associated with the weights
    $\btheta$. The prior over $\bv^\btheta$ is then defined as the projection of the
    prior over $\Vpa$,
    \begin{align}
      &\bv^\btheta \cong \btheta \sim \Ncal(\bzero, \bB\T \SigmaPA \bB)
      \\
%       \qquad \bv^\btheta \in \Vcal\, .
       &\text{(the symbol $\cong$ indicates that $\bv^\btheta$ is isomorphic to $\btheta$)}\, . % ~\cite{freifeld2015transform}
       \nonumber
    \end{align}
    We remark that other priors are easily implemented, e.g.\ a
%     Gaussian Markov Random Field (GMRF) could provide an (improper) prior that
%     prefer smoothness while not penalization large offsets.
    Gaussian Markov Random Field (GMRF) could provide an (improper) prior that
    prefers smoothness without penalizing large offsets.

    For inference, we adopt a sampling approach using a Markov-Chain Monte-Carlo (MCMC) method~\cite{Robert:Book:MCMC:2004}.    
    Particularly, our implementation uses the time-honored Metropolis algorithm~\cite{metropolis1953equation}
    with a (localized) Gaussian proposal.
    This is made tractable by the efficient numerical implementation of the exponential map~\cite{freifeld2015transform}
    and the moderate dimensionality of $\Vcal$.
    Using this sampling scheme, we then set $T^{\btheta_{mn}} = T^{\btheta} \sim p(T^{\btheta_{}} | \bx_m, \bx_n) $.
    
    Figure~\ref{fig:align_examples} shows examples of image alignments performed
    on the MNIST data set. It is evident that the estimated transformations
    are highly non-rigid, yet reasonably smooth. While the transformations generally
    provide a good fit, they are unable to capture topological differences. The
%     right-most row 
    rightmost column
    of Fig.~\ref{fig:align_examples} shows two such examples:
    one of the top 2-digits has a self-intersection, while the other does not;
    and one of the bottom 8-digits is open-ended at the top, while the other is
    not. In both cases, the transformation preserves the topology of the
    input image.
    
    The entire alignment process takes approximately 10 seconds when implemented
    on GPU hardware.

%    What makes MCMC computationally tractable here are 
%    the facts 
%    that for CPA velocity fields, the exponential map -- which is needed for the likelihood evaluations -- 
%    has a fast numerical implementation \cite{freifeld2015transform}, 
%    that $d$ is not too large,
%    and that evaluating the prior is easy (since it is a Gaussian).
%    Using this sampling scheme, we then set $T^{\btheta_{mn}}=T^{\btheta}\sim p(T^{\btheta_{}} | \bx_m, \bx_n) $.

  % Statistics on manifolds
  \subsection{Statistical Models of Transformations}
Having aligned images pairs as discussed above, we now treat the inferred $\set{T^{\btheta_{mn}}}$ as observations. 
  We aim to build a statistical model of the transformations found within a
  class, i.e.\ $p(T^\btheta | y)$. Let $T^{\btheta_{nm}}$ denote the inferred CPAB transformation
  from image $\bx_n$ to $\bx_m$, and let $\btheta_{nm}\cong\bv^{\btheta_{nm}} = \Log_{T^{\bzero}}(T^{\btheta_{nm}})$
  be the representation of the transformation in the tangent space of the identity
  transformation; 
  we may identify
  this tangent space with $\RR^d$.
  Here $\Log_{T}(\cdot)$ is the inverse of $\Exp_{T}(\cdot)$.
  % By a slight abuse of notation, we may alsdue o identity 
  % $\bv^{\btheta_{nm}}$ with $\btheta_{nm}$, and simply write $\btheta_{nm} = \Log_{T^{\bzero}}(T^{\btheta_{nm}})$.
  
  Common strategies for describing distributions over manifold-valued data include
  kernel density estimators \cite{miller:cvpr:2003}, and parametric models over tangent
  vectors \cite{pennec:jmiv:2006}. We have found the latter approach to work well
  for modeling the estimated transformations. To minimize the distortion due to
  the linearization
  of the manifold, it is common to use the tangent space at the intrinsic
  mean, defined as
  \begin{align}
    \mu = \argmin_{\tilde{\mu}} \sum_{nm} \dist^2(T^{\btheta_{nm}}, \tilde{\mu}),
    \label{eq:intrinsic_mean}
  \end{align}
  where $\dist^2(T^{\btheta_{nm}}, \tilde{\mu}) = \| \Log_{\tilde{\mu}}(T^{\btheta_{nm}}) \|^2$
  denotes the squared distance on the manifold. It is straight-forward
  to show that, in our case, the identity transformation is an intrinsic mean:
  \begin{lem}
    The collection of pair-wise transformations $\{ T^{\btheta_{nm}}, T^{\btheta_{mn}} \}_{n, m} \forall n, m$
    has an intrinsic mean \eqref{eq:intrinsic_mean}  $\mu = T^{\bzero}$.
  \end{lem}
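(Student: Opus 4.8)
The plan is to exploit the first-order optimality characterization of the intrinsic mean together with the group structure of $M$. Recall that $\mu$ in \eqref{eq:intrinsic_mean} minimizes $f(\tilde{\mu}) = \sum_{nm} \dist^2(T^{\btheta_{nm}}, \tilde{\mu})$, and that the Riemannian gradient of $\tilde{\mu} \mapsto \dist^2(p, \tilde{\mu})$ is $-2\Log_{\tilde{\mu}}(p)$. Hence every stationary point $\mu$ of $f$ obeys the Karcher condition $\sum_{nm} \Log_{\mu}(T^{\btheta_{nm}}) = \bzero$. First I would reduce the claim $\mu = T^{\bzero}$ to verifying this vanishing-sum condition at the identity, since $T^{\bzero}$ is precisely the point of tangency at which the logarithm has a trivial closed form.

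At $T^{\bzero}$ the logarithm is, by construction of the exponential map through \eqref{eqn:intEqn}, just the tangent coordinate: $\Log_{T^{\bzero}}(T^{\btheta_{nm}}) = \bv^{\btheta_{nm}} \cong \btheta_{nm}$. The key structural input is that $M$ is closed under inversion with $(T^{\btheta})^{-1} = T^{-\btheta}$, so the transformation aligning $\bx_m$ to $\bx_n$ is the inverse of the one aligning $\bx_n$ to $\bx_m$, i.e.\ $T^{\btheta_{mn}} = (T^{\btheta_{nm}})^{-1} = T^{-\btheta_{nm}}$, whence $\Log_{T^{\bzero}}(T^{\btheta_{mn}}) = -\btheta_{nm}$. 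Because the collection $\{T^{\btheta_{nm}}, T^{\btheta_{mn}}\}_{n,m}$ contains each transformation together with its inverse, the set of log-maps is symmetric under negation, the terms occur in cancelling pairs $\pm\btheta_{nm}$, and summing over the symmetric index set yields $\sum_{nm} \Log_{T^{\bzero}}(T^{\btheta_{nm}}) = \bzero$. Thus $T^{\bzero}$ satisfies the Karcher condition and is a critical point of $f$.

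The remaining, and main, obstacle is upgrading this critical point to a genuine global minimizer, as demanded by the $\argmin$ in \eqref{eq:intrinsic_mean}. I would attempt a symmetry-plus-convexity argument: inversion $T \mapsto T^{-1}$ fixes $T^{\bzero}$, and since the data set is invariant under inversion, $f$ is invariant under the reflection $\tilde{\mu} \mapsto \tilde{\mu}^{-1}$; restricting to a geodesic ball about $T^{\bzero}$ on which each squared-distance term is geodesically convex, $f$ becomes strictly convex there, so its unique stationary point $T^{\bzero}$ is the minimizer. The delicate point is certifying that the aligned transformations actually lie within such a convexity radius on the nonlinear manifold $M$, equivalently that no competing minimizer arises elsewhere; absent a global convexity guarantee this is where the real work lies, and the cleanest honest conclusion is that $T^{\bzero}$ is \emph{an} intrinsic mean in the stationary-point (Karcher) sense, which is exactly what the indefinite article in the statement permits.
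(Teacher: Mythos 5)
Your argument is essentially the paper's proof: the Riemannian gradient of the objective in \eqref{eq:intrinsic_mean} is $\sum_{nm}\Log_{\tilde{\mu}}(T^{\btheta_{nm}})$, and the inversion identity $T^{\btheta_{mn}} = T^{-\btheta_{nm}}$ makes the log-terms cancel in $\pm\btheta_{nm}$ pairs at $T^{\bzero}$, so the identity is a critical point. Your closing caveat --- that this certifies only a Karcher stationary point rather than the global minimizer demanded by the $\argmin$ --- flags a gap the paper's own one-line proof silently elides, so you are if anything more careful than the original.
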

  \begin{proof}
    The gradient of Eq.~\ref{eq:intrinsic_mean} is
    $\sum_{nm} \Log_{\tilde{\mu}}(T^{\btheta_{nm}})$~\cite{pennec:jmiv:2006}.
    Since $T^{\btheta_{nm}} = T^{-\btheta_{mn}}$,
    the gradient is zero at $T^{\bzero}$, the identity transformation, implying that this is an intrinsic mean.
    %\qed  
    \hfill$\square$
  \end{proof}
  With this in mind, we build a tangential parametric model at the identity. 
  Empirically, we have
  found ${\bv^\btheta}|y \sim \mathcal{N}(\bzero, \bSigma_y)$
  to provide a good fit to the data. 
  Here $\bSigma_y = \frac{1}{N_y} \sum_{nm} (\bv^{\btheta_{nm}}) (\bv^{\btheta_{nm}})\T$
   denotes the Riemannian covariance of the transformations~\cite{pennec:jmiv:2006}, 
%  denotes the sample covariance of the transformations as expressed in the tangent space~\cite{pennec:jmiv:2006}, 
  where $N_y$ is the number of sample pairs used for this class.
  %More compactly, we can write $\bSigma_y = \frac{1}{N_y} \sum_{nm} {\btheta_{nm}} \btheta_{nm}\T$.
  We then have
%     $p(T^\btheta | y) = \mathcal{N} (\Log_{T^{\bzero}}(T^\btheta) | \bzero, \bSigma_y)$.
  \begin{align}
    p(T^\btheta | y)
      &\propto \exp\left(
        -\frac{1}{2} \Log_{T^{\bzero}}(T^\btheta)\T \bSigma_y^{-1} \Log_{T^{\bzero}}(T^\btheta)
      \right) \\
      &= \exp\left(
        -\frac{1}{2} ({\bv^\btheta})\T \bSigma_y^{-1} {\bv^\btheta}
      \right)\, .
  \end{align}

% \OREN{In case we want to use log p}
%   \begin{align}
%   \log   p(T^\btheta | y)
%       & =   
%         -\frac{1}{2} \Log_{T^{\bzero}}(T^\btheta)\T \bSigma_y \Log_{T^{\bzero}}(T^\btheta)
%      + \mathrm{const}
%       %
%       = 
%         -\frac{1}{2} (\vec{v^\btheta})\T \bSigma_y (\vec{v^\btheta})
%        + \mathrm{const}
%       \, .
%   \end{align}
%   
%  
%  
%   

  \subsection{Sampling New Data}\label{sec:augmentation}
  We assume that transformations and images are conditionally independent given
  the class label $y$. 
  New transformations are generated as $T^{\btheta}_i = \Exp_{T^{\vec{0}}}(\bv^{\btheta}_i)$,
  where $\bv^{\btheta}_i \sim \mathcal{N}(\vec{0}, \Sigma_y)$.
  This transformation is then applied to a uniformly-chosen image from the
  training set, yielding a new image. 
  Note that by sampling not only the transformation but also the ``template'' image, the generated samples capture 
  the fact that, due to topological changes, some of the inter-class deformations are not diffeomorphic.

  Figure~\ref{fig:examples} shows randomly-chosen images from the MNIST dataset
  and randomly-deformed instances of the same images. It is evident that the
  randomly-sampled transformations are non-rigid, yet produce realistically-looking images. A more systematic 
illustration is given in Fig.~\ref{fig:PCs}.
  The shown images are $\pm 3$ times the standard deviation along the first principal
  component of the transformations found in each class. It is evident that $p(T^{\btheta} | y)$
  captures non-rigid deformations in the data that would be difficult to
  model manually, e.g.\ the first principal deformation of ``2'' captures
  whether the stroke of the tail bends upwards or downwards,
  while the first principal deformation of ``8'' captures the position of the self-intersection. 
  The supplementary material contains similar plots for higher-order components
  as well as an animation of these transformations. This animation provides the
  best visualization of the principal components.

  \begin{figure*}
    \centering
    \resizebox{0.9\textwidth}{!}{%
    \includegraphics[width=0.1\textwidth, clip=true, trim=4mm 4mm 4mm 4mm]{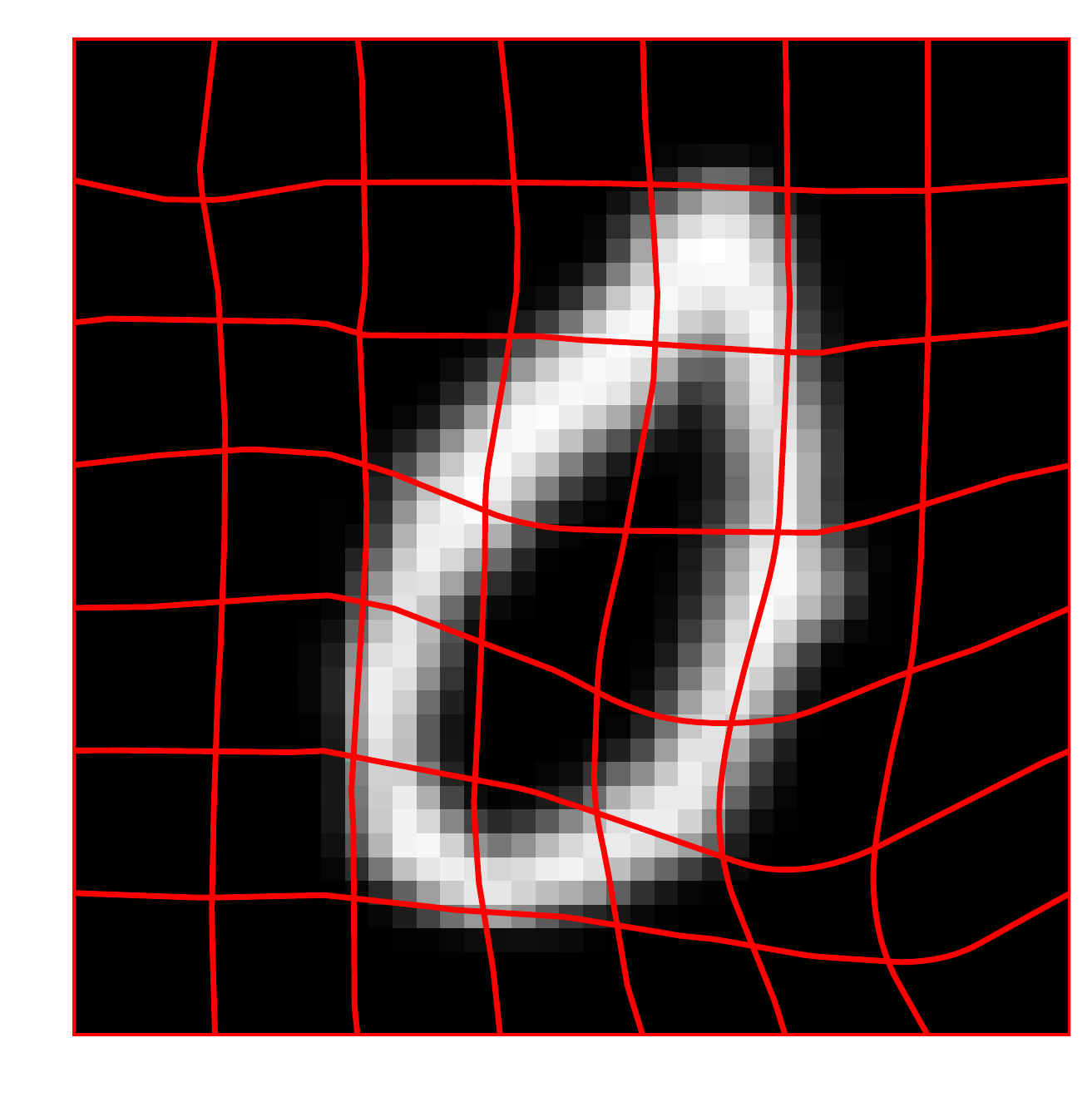}  \hspace{-3mm}
    \includegraphics[width=0.1\textwidth, clip=true, trim=4mm 4mm 4mm 4mm]{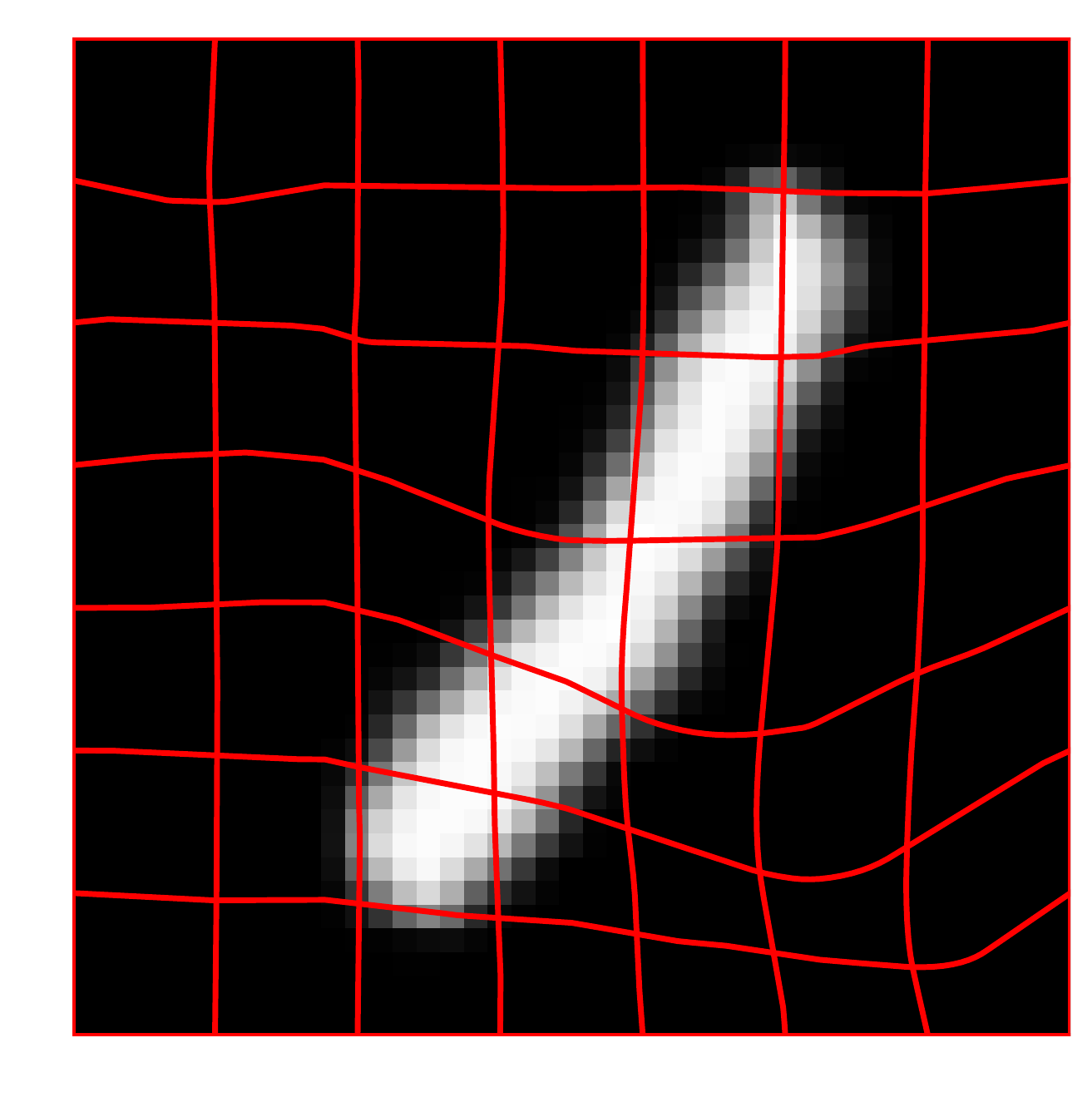}   \hspace{-3mm}
    \includegraphics[width=0.1\textwidth, clip=true, trim=4mm 4mm 4mm 4mm]{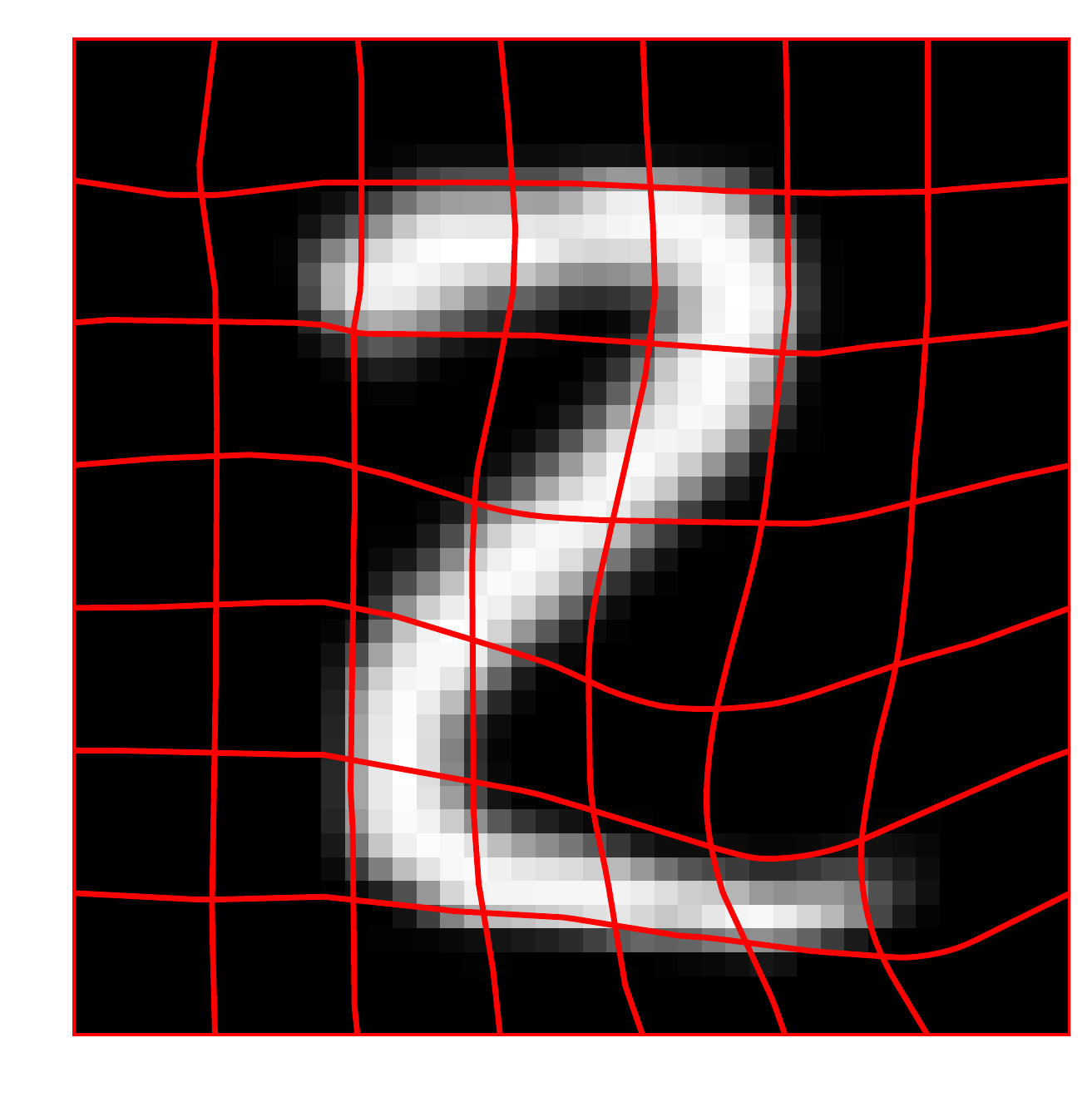}   \hspace{-3mm}
    \includegraphics[width=0.1\textwidth, clip=true, trim=4mm 4mm 4mm 4mm]{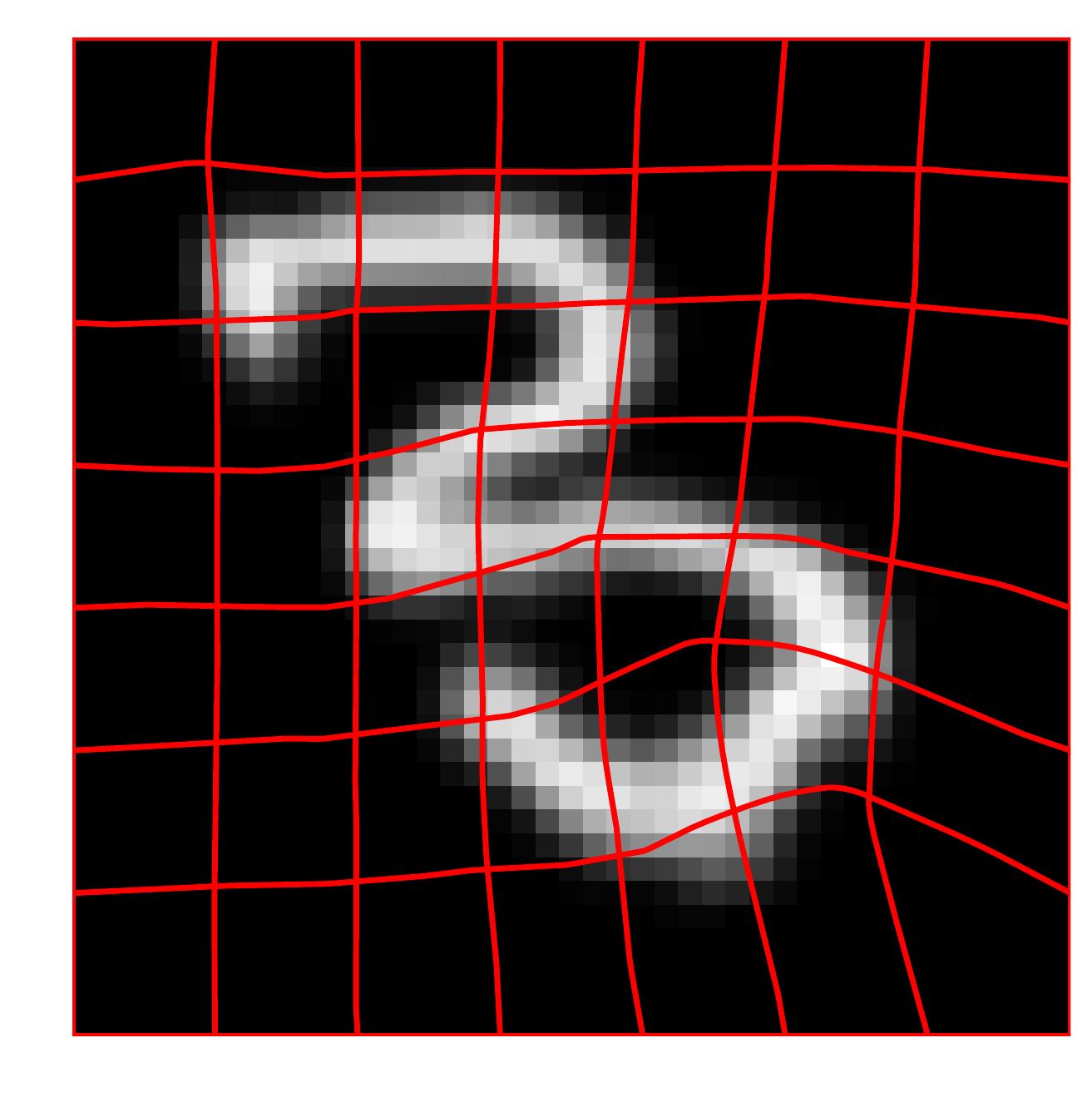} \hspace{-3mm}
    \includegraphics[width=0.1\textwidth, clip=true, trim=4mm 4mm 4mm 4mm]{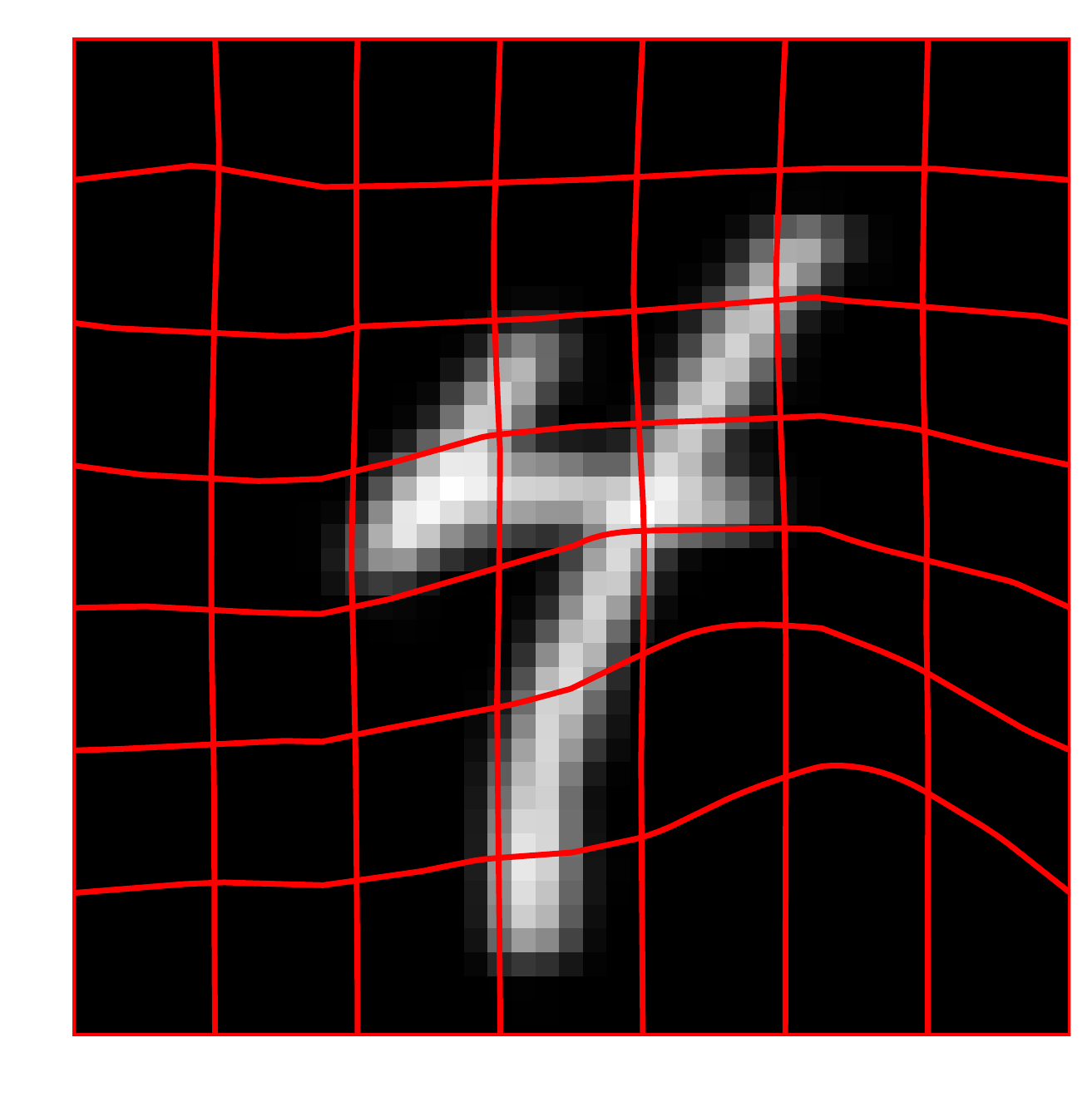}  \hspace{-3mm}
    \includegraphics[width=0.1\textwidth, clip=true, trim=4mm 4mm 4mm 4mm]{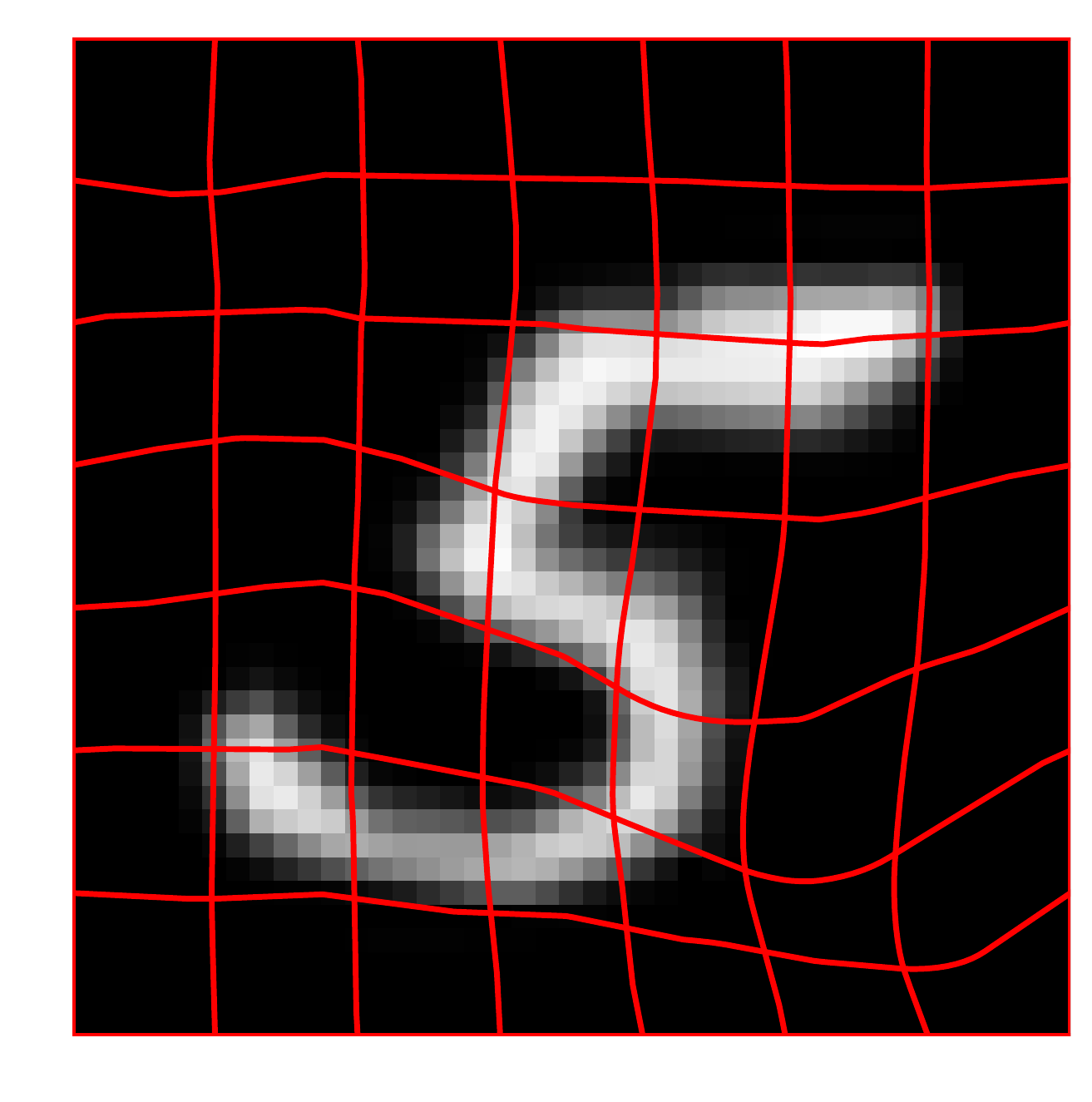}  \hspace{-3mm}
    \includegraphics[width=0.1\textwidth, clip=true, trim=4mm 4mm 4mm 4mm]{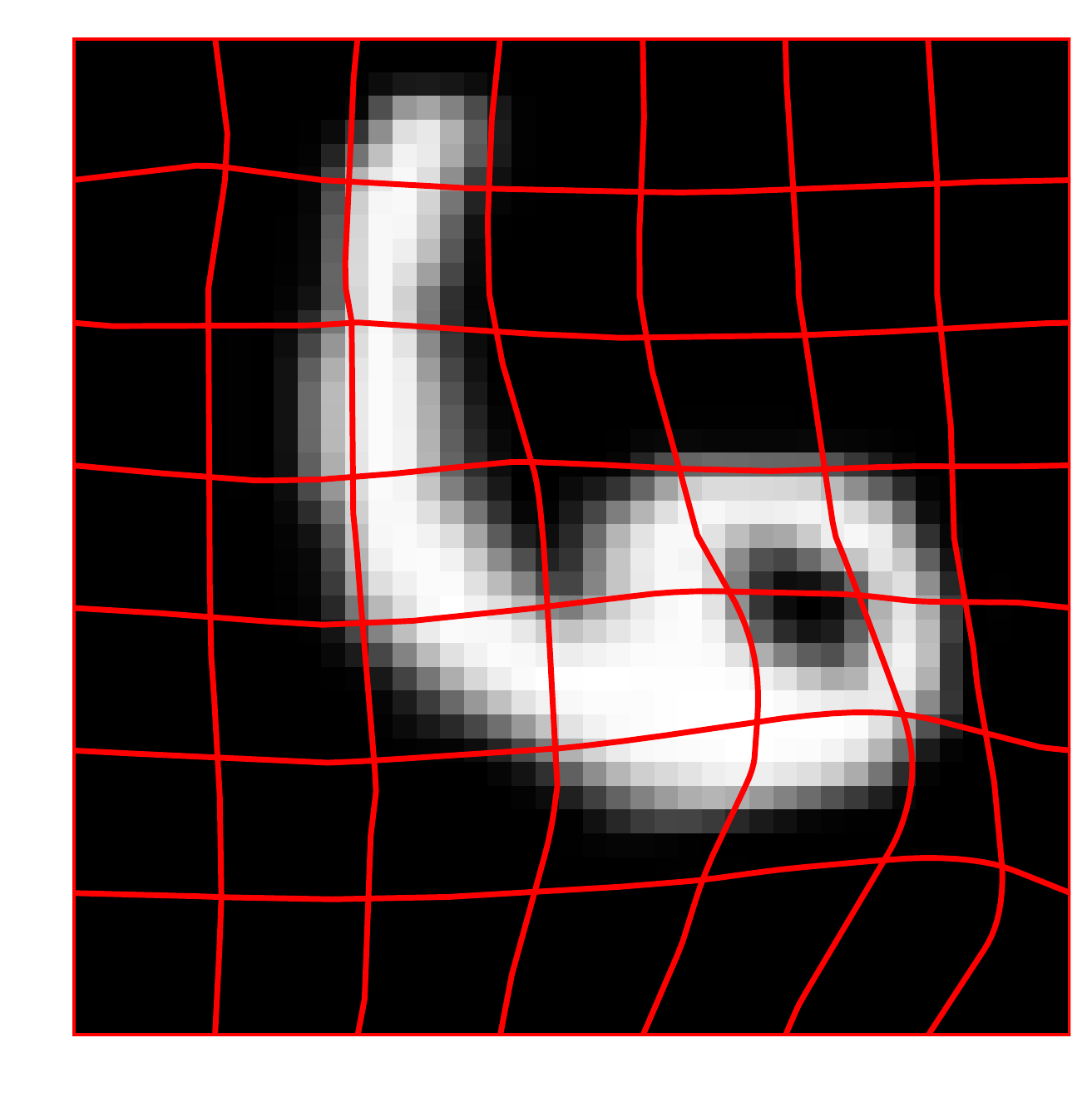}   \hspace{-3mm}
    \includegraphics[width=0.1\textwidth, clip=true, trim=4mm 4mm 4mm 4mm]{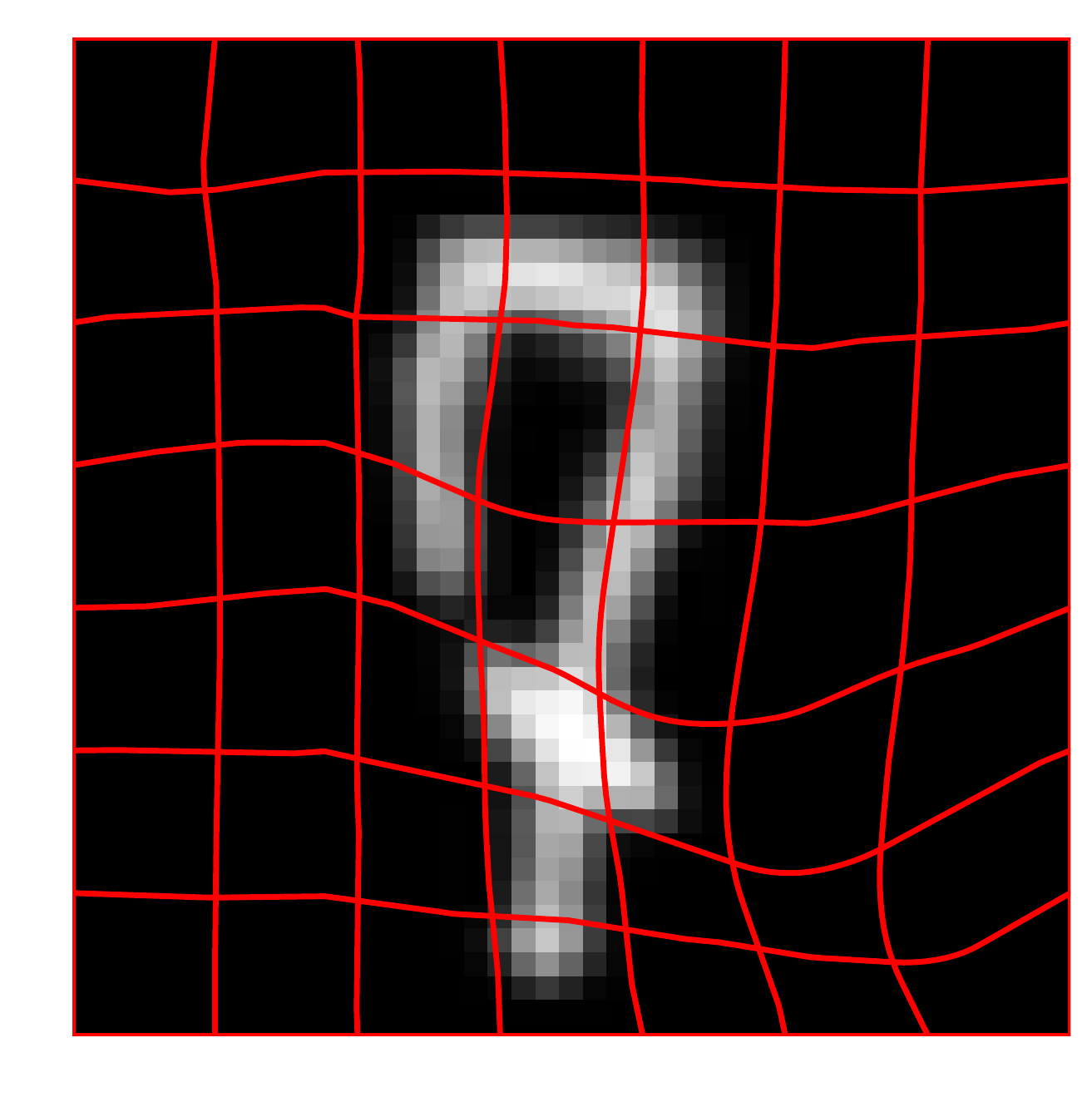} \hspace{-3mm}
    \includegraphics[width=0.1\textwidth, clip=true, trim=4mm 4mm 4mm 4mm]{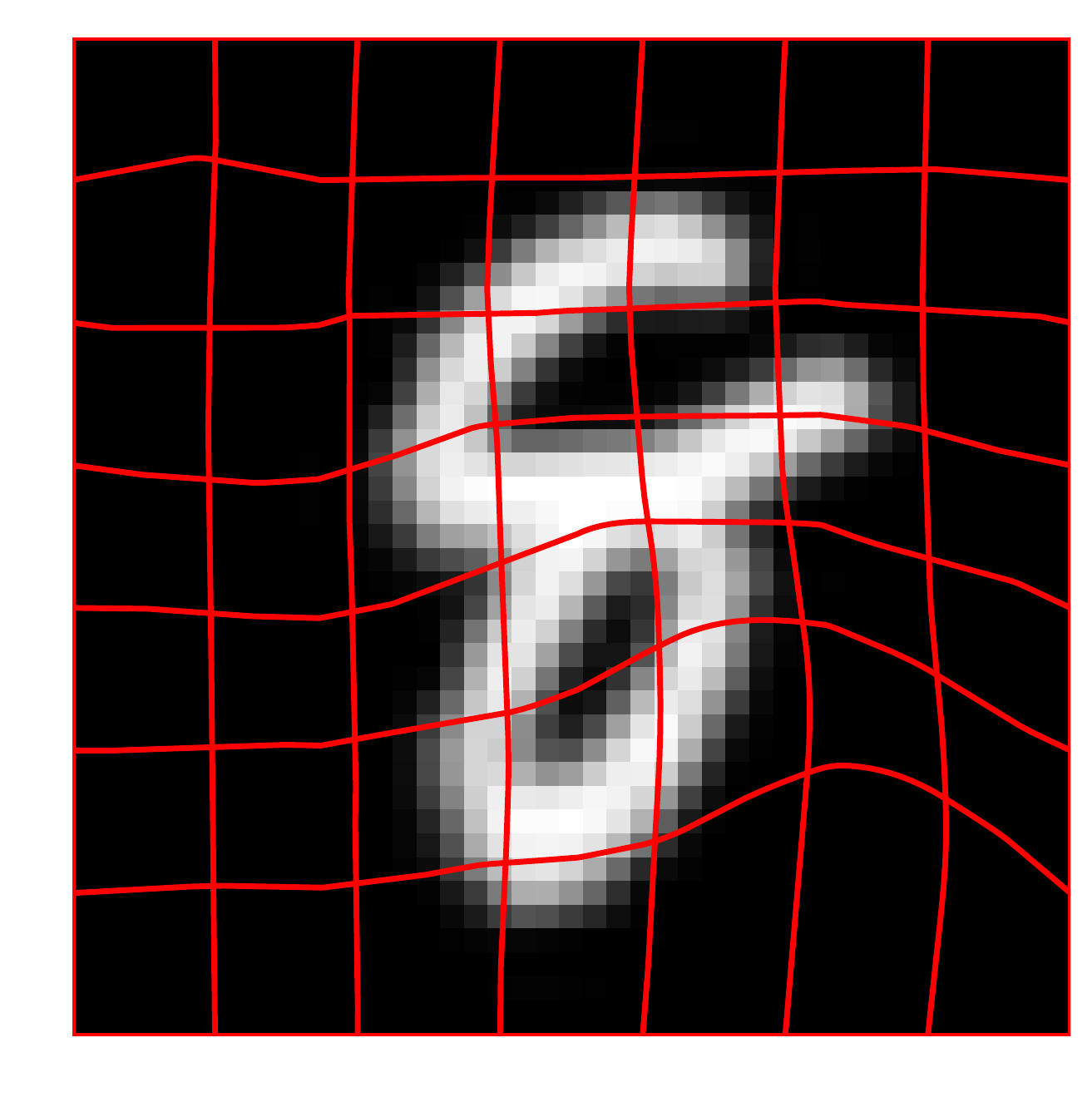} \hspace{-3mm}
    \includegraphics[width=0.1\textwidth, clip=true, trim=4mm 4mm 4mm 4mm]{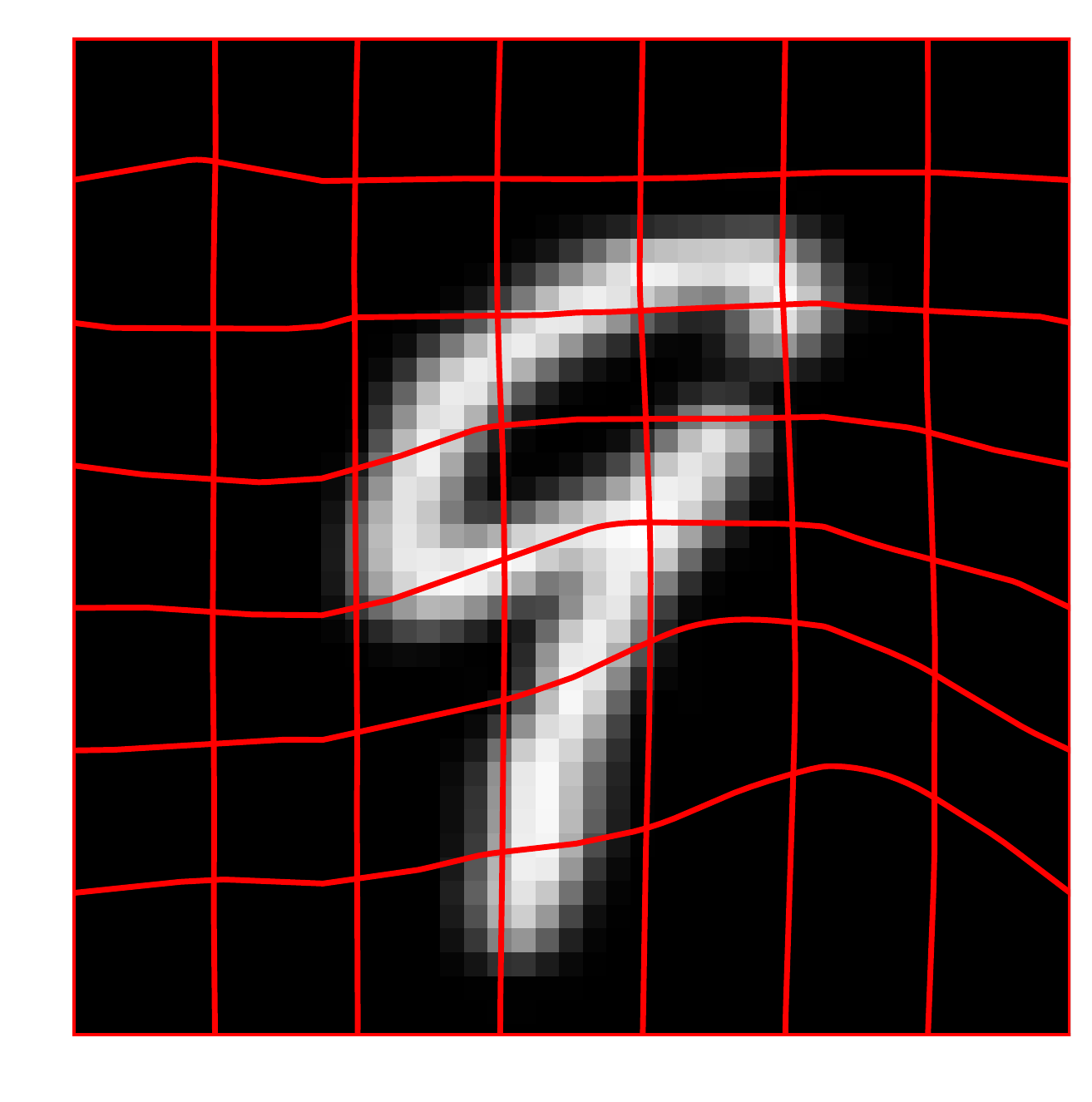}
    }
    \resizebox{0.9\textwidth}{!}{%
    \includegraphics[width=0.1\textwidth, clip=true, trim=4mm 4mm 4mm 4mm]{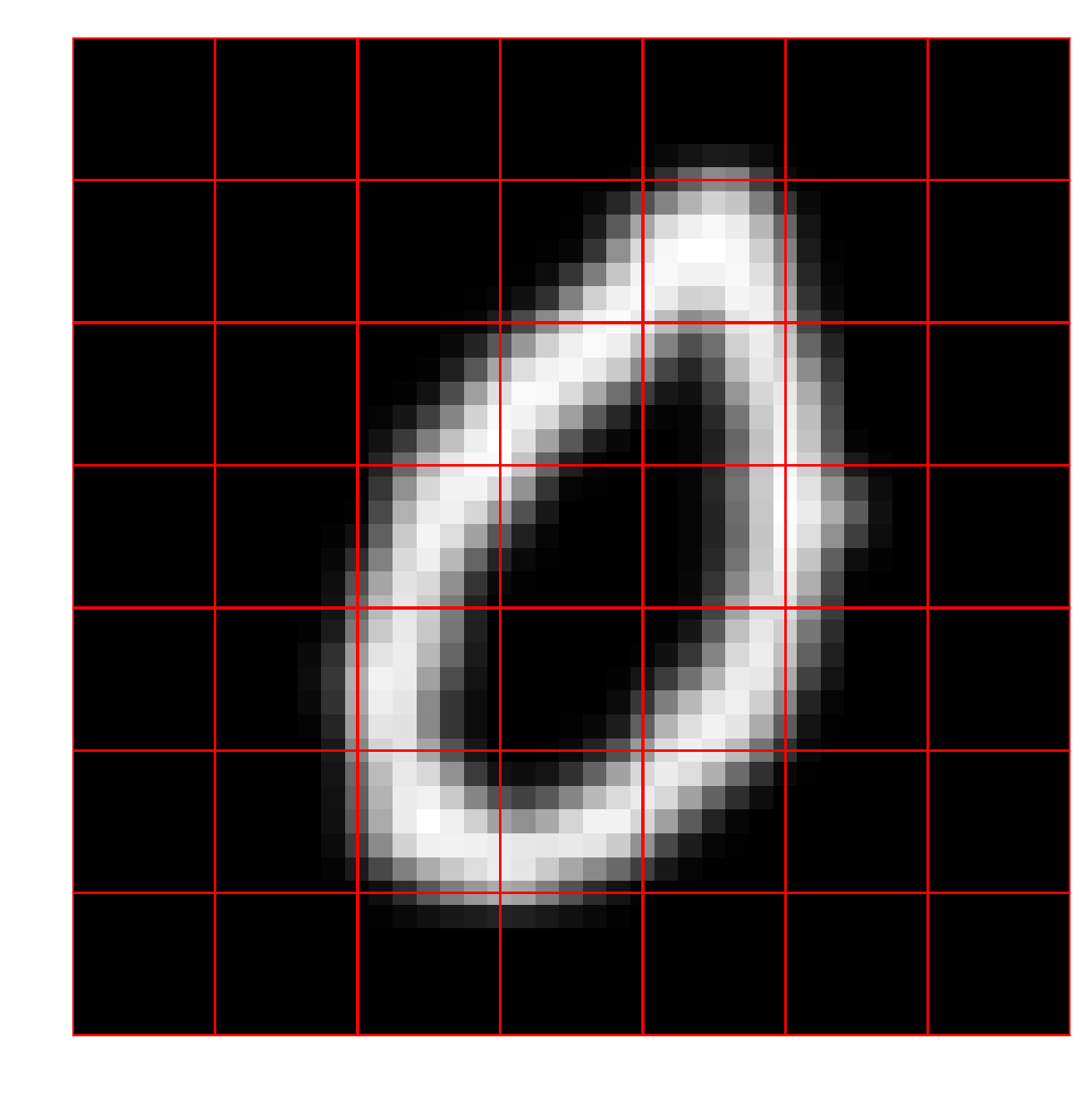}  \hspace{-3mm}
    \includegraphics[width=0.1\textwidth, clip=true, trim=4mm 4mm 4mm 4mm]{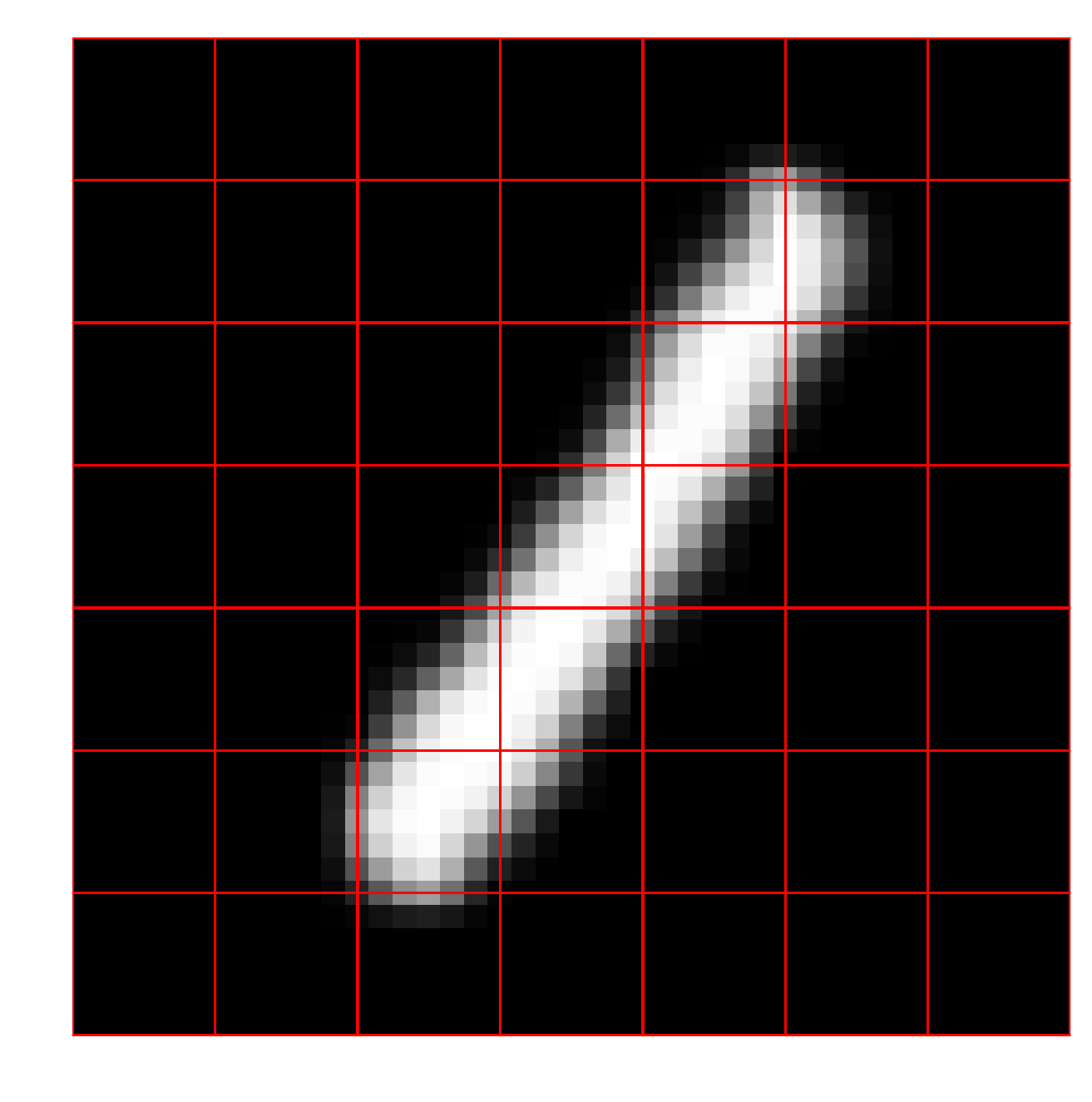}   \hspace{-3mm}
    \includegraphics[width=0.1\textwidth, clip=true, trim=4mm 4mm 4mm 4mm]{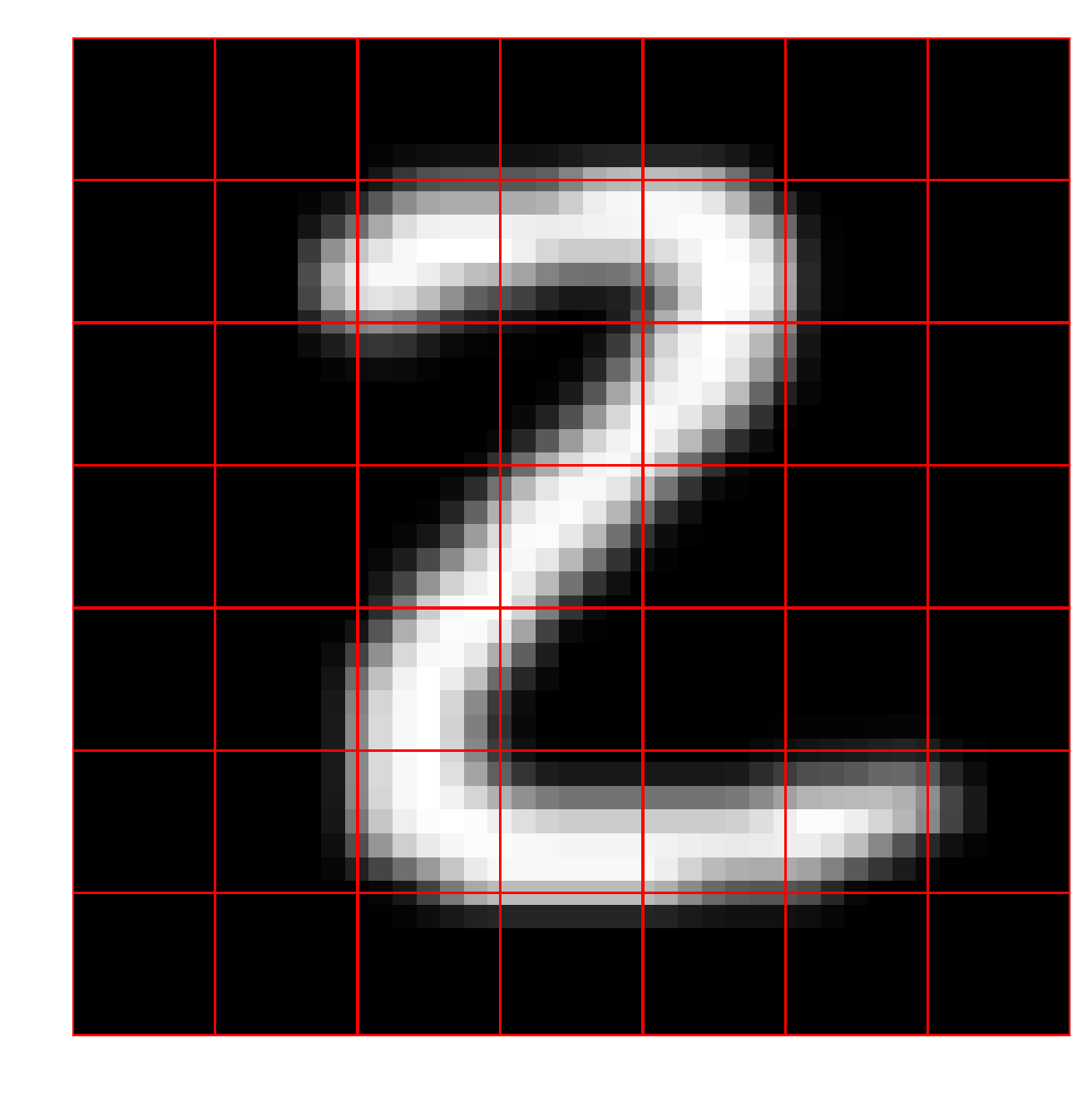}   \hspace{-3mm}
    \includegraphics[width=0.1\textwidth, clip=true, trim=4mm 4mm 4mm 4mm]{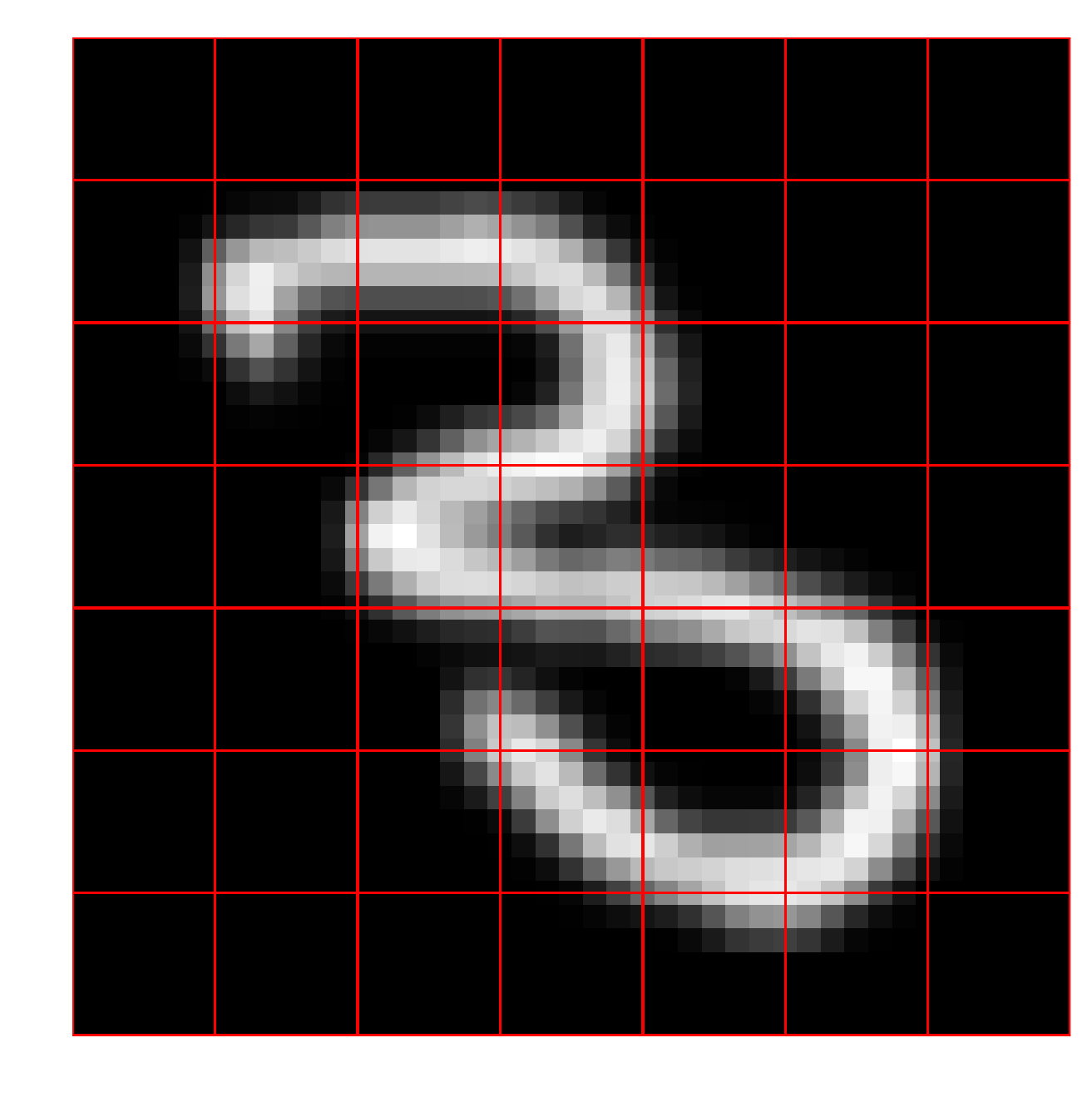} \hspace{-3mm}
    \includegraphics[width=0.1\textwidth, clip=true, trim=4mm 4mm 4mm 4mm]{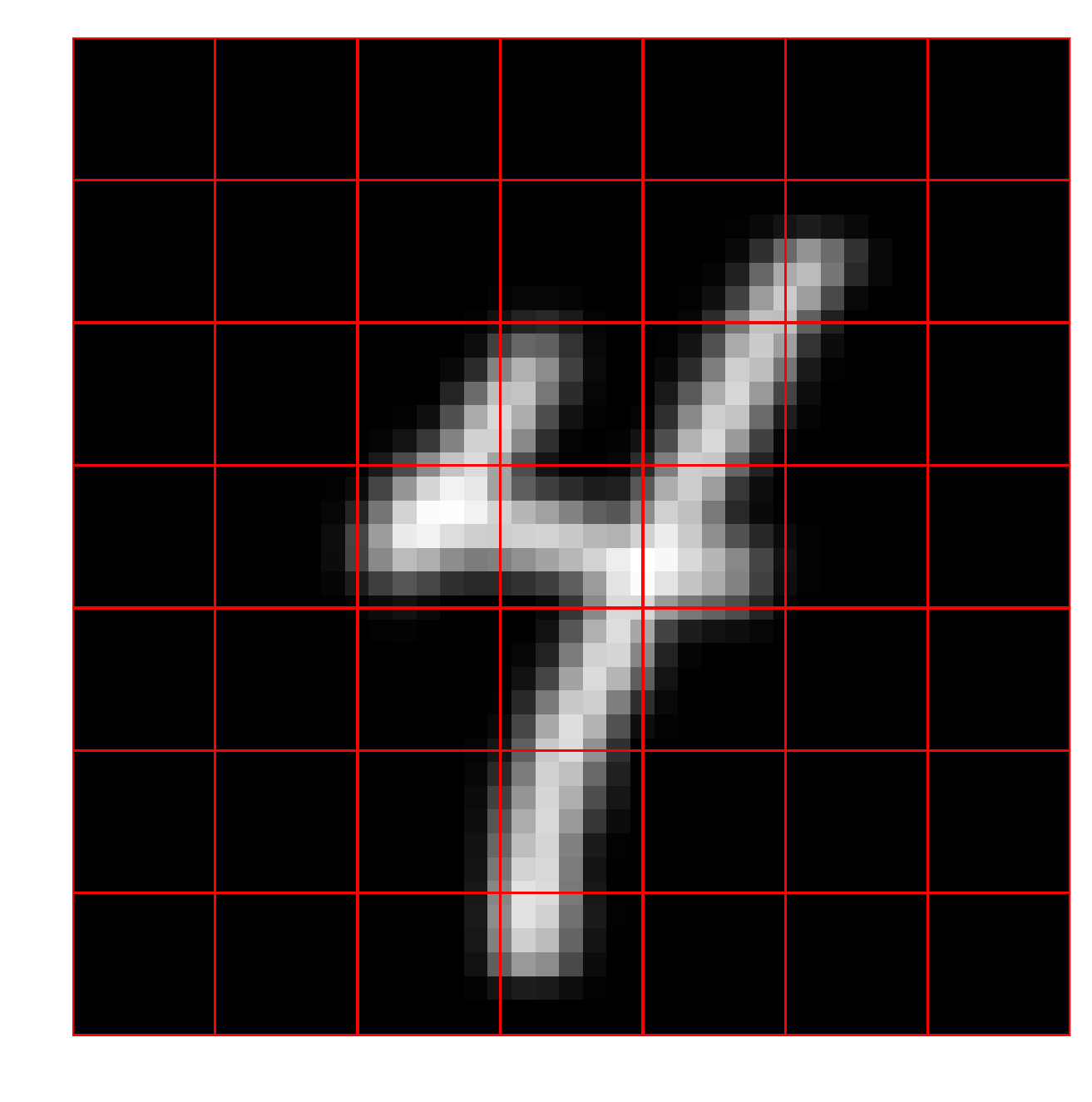}  \hspace{-3mm}
    \includegraphics[width=0.1\textwidth, clip=true, trim=4mm 4mm 4mm 4mm]{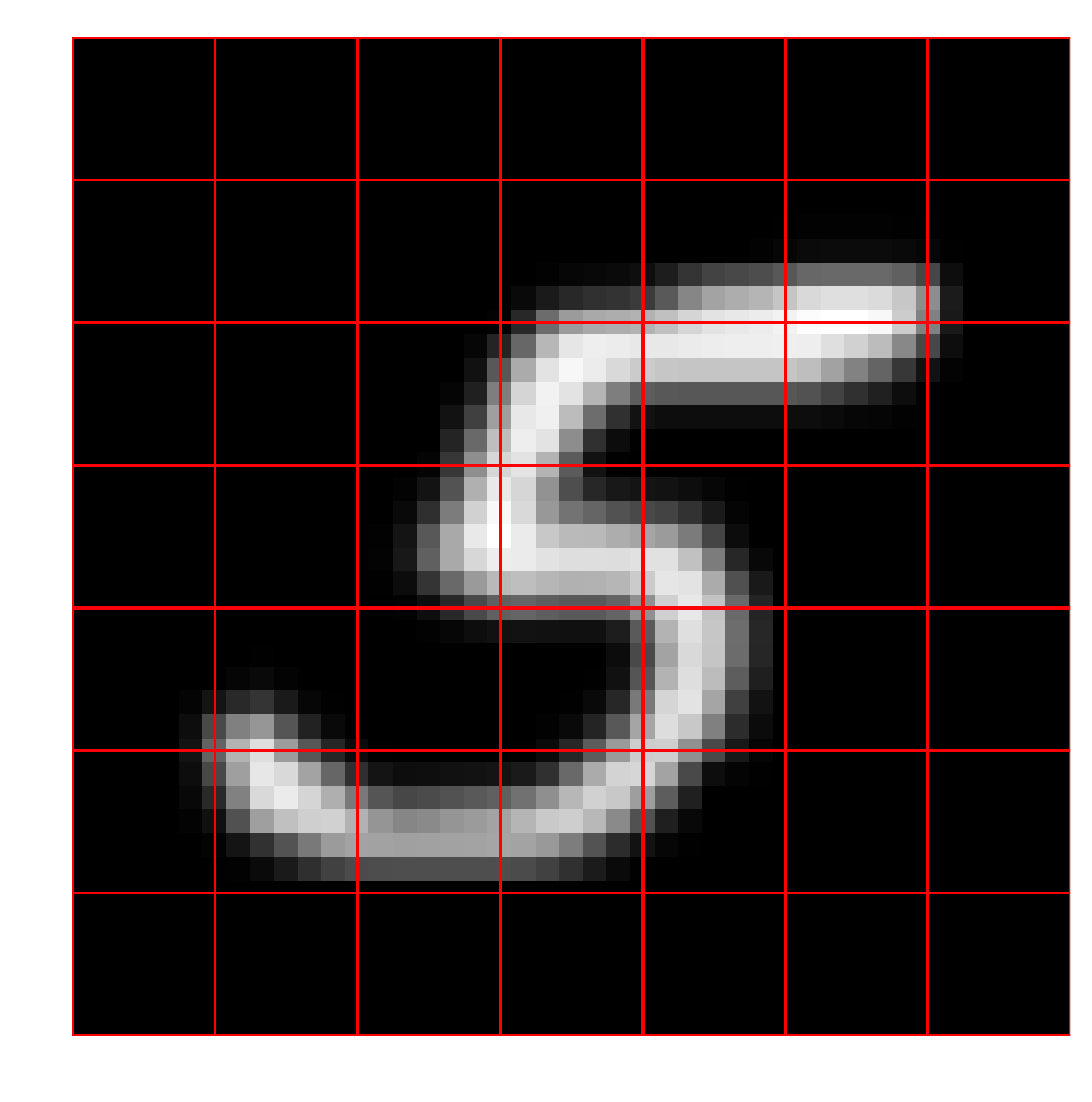}  \hspace{-3mm}
    \includegraphics[width=0.1\textwidth, clip=true, trim=4mm 4mm 4mm 4mm]{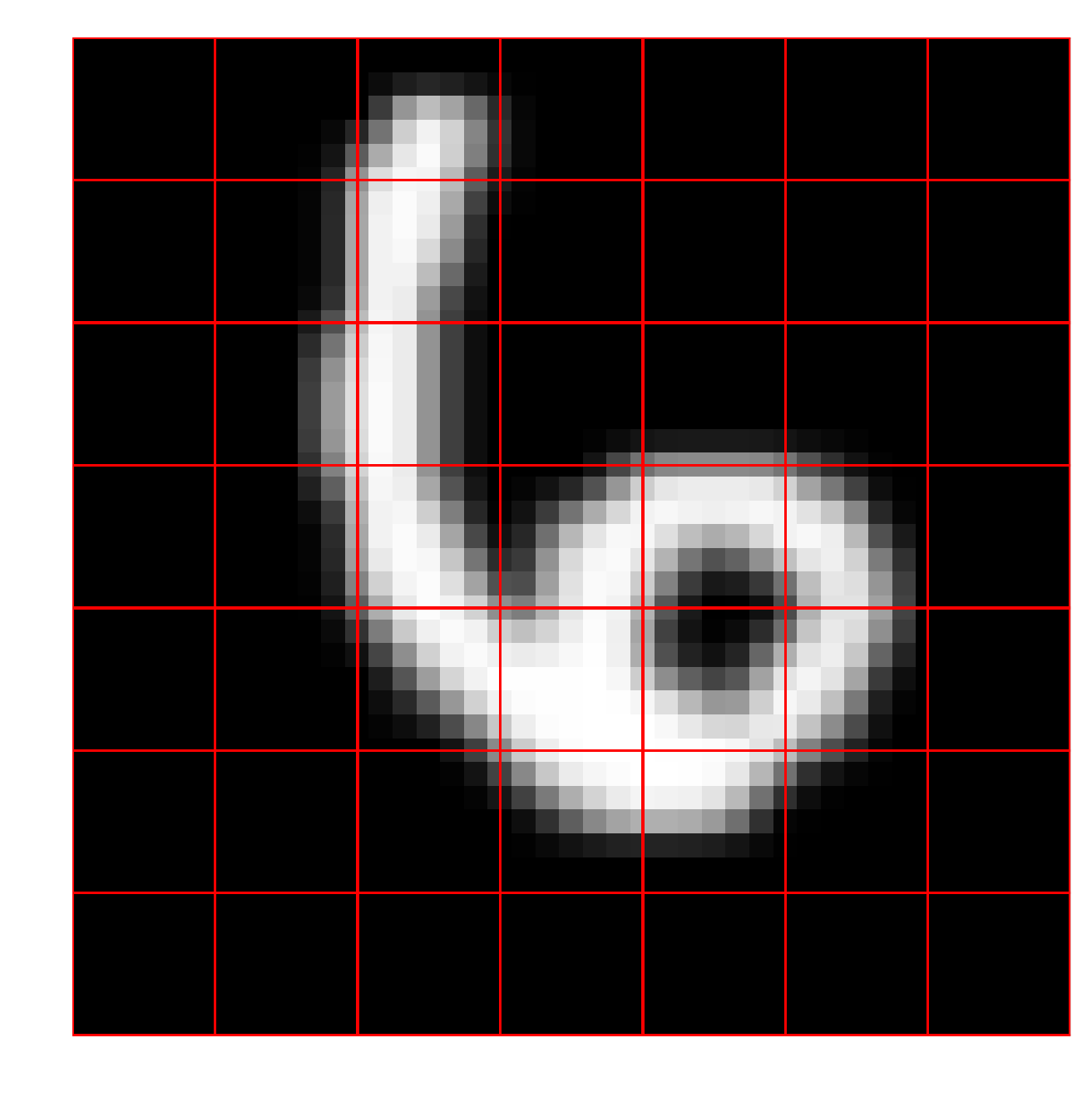}   \hspace{-3mm}
    \includegraphics[width=0.1\textwidth, clip=true, trim=4mm 4mm 4mm 4mm]{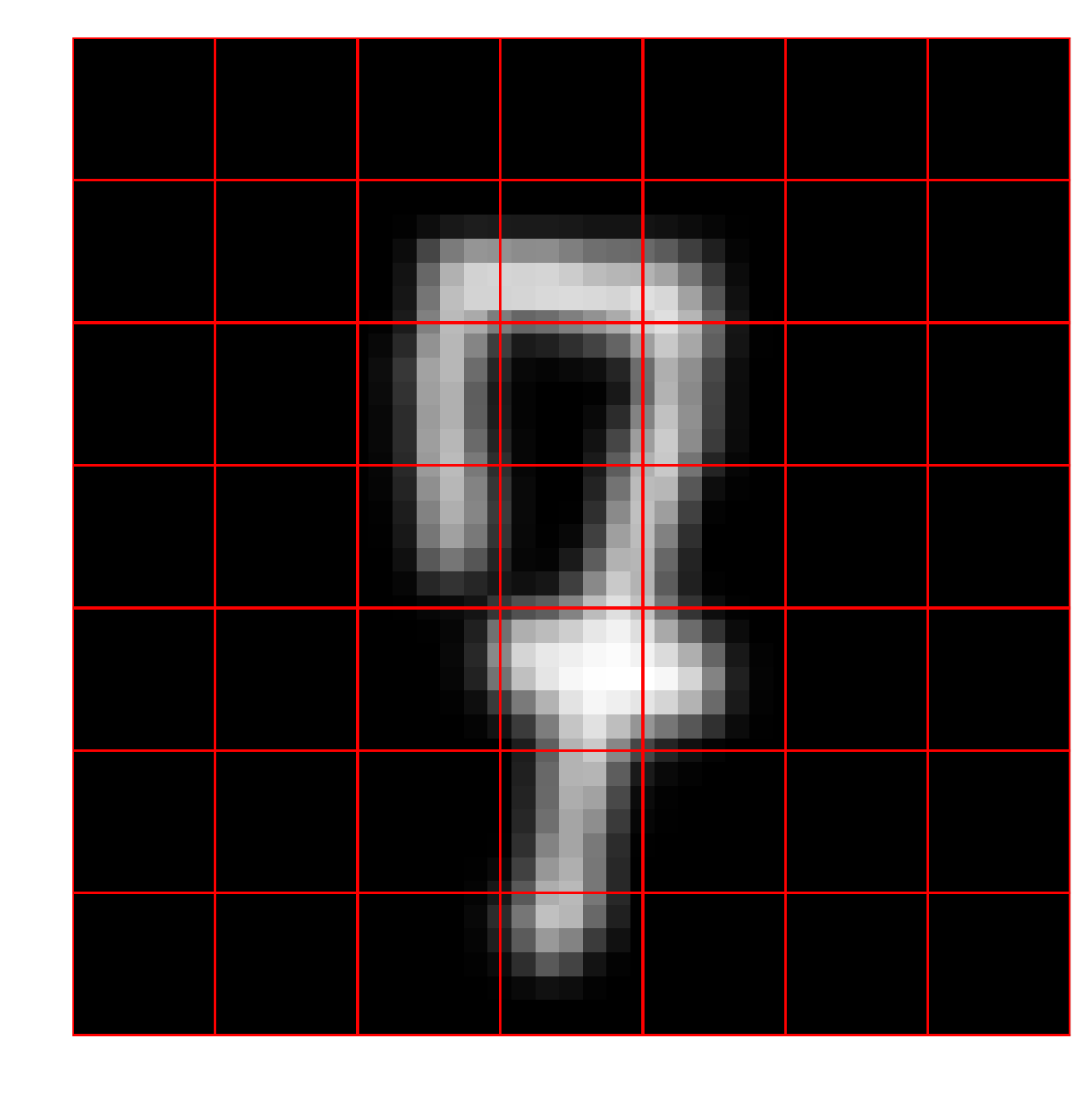} \hspace{-3mm}
    \includegraphics[width=0.1\textwidth, clip=true, trim=4mm 4mm 4mm 4mm]{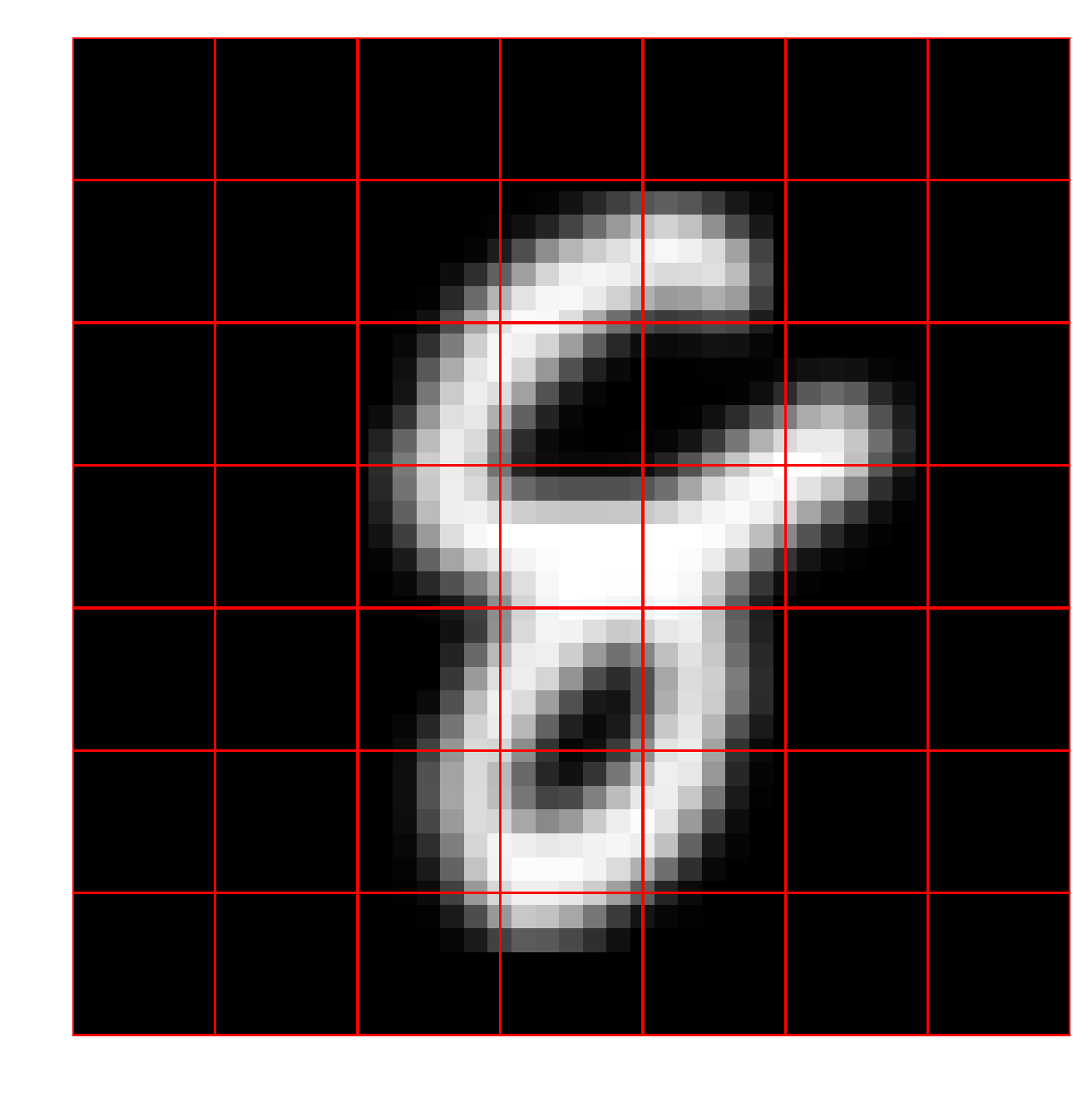} \hspace{-3mm}
    \includegraphics[width=0.1\textwidth, clip=true, trim=4mm 4mm 4mm 4mm]{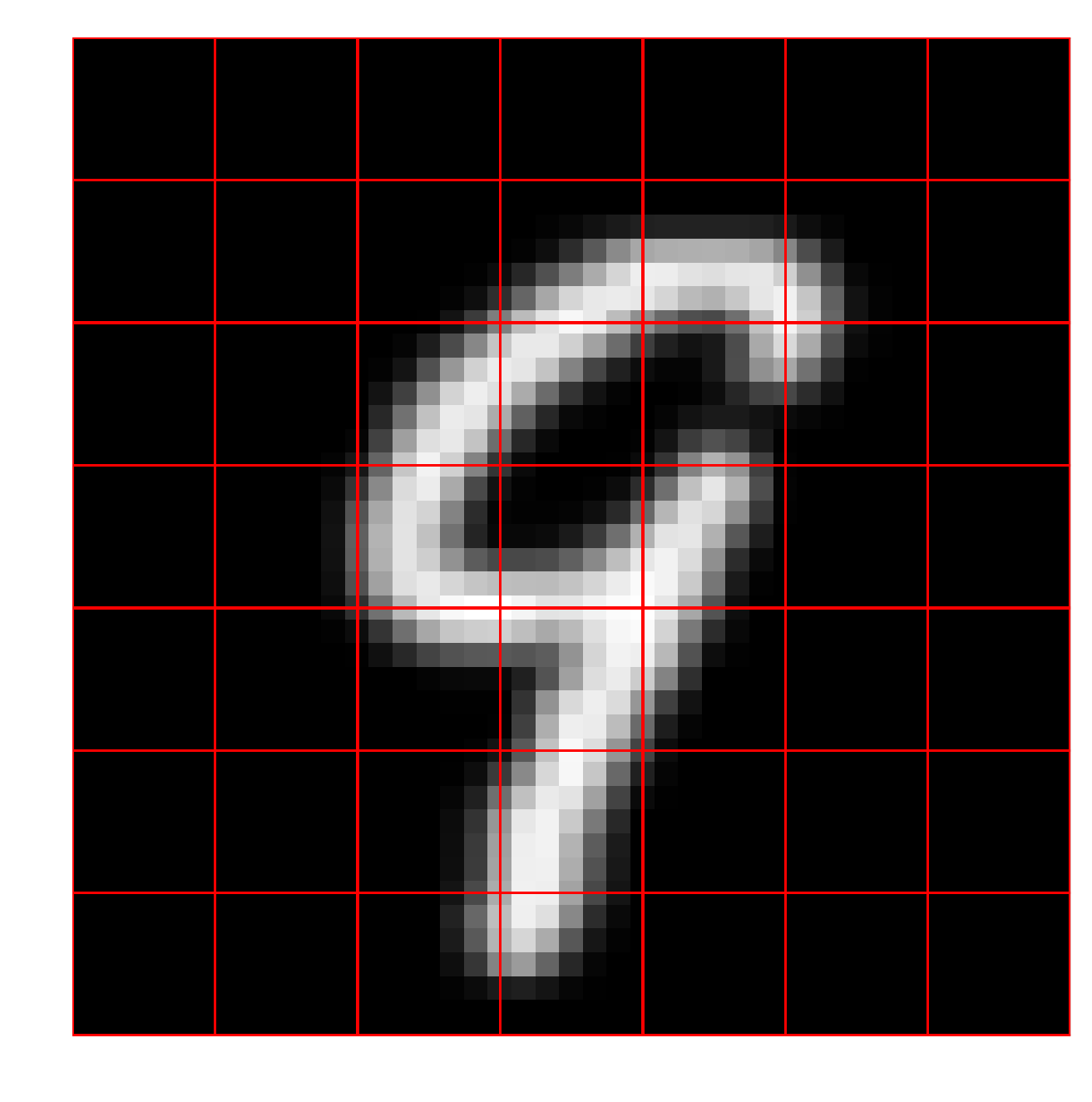}
    }
    \resizebox{0.9\textwidth}{!}{%
    \includegraphics[width=0.1\textwidth, clip=true, trim=4mm 4mm 4mm 4mm]{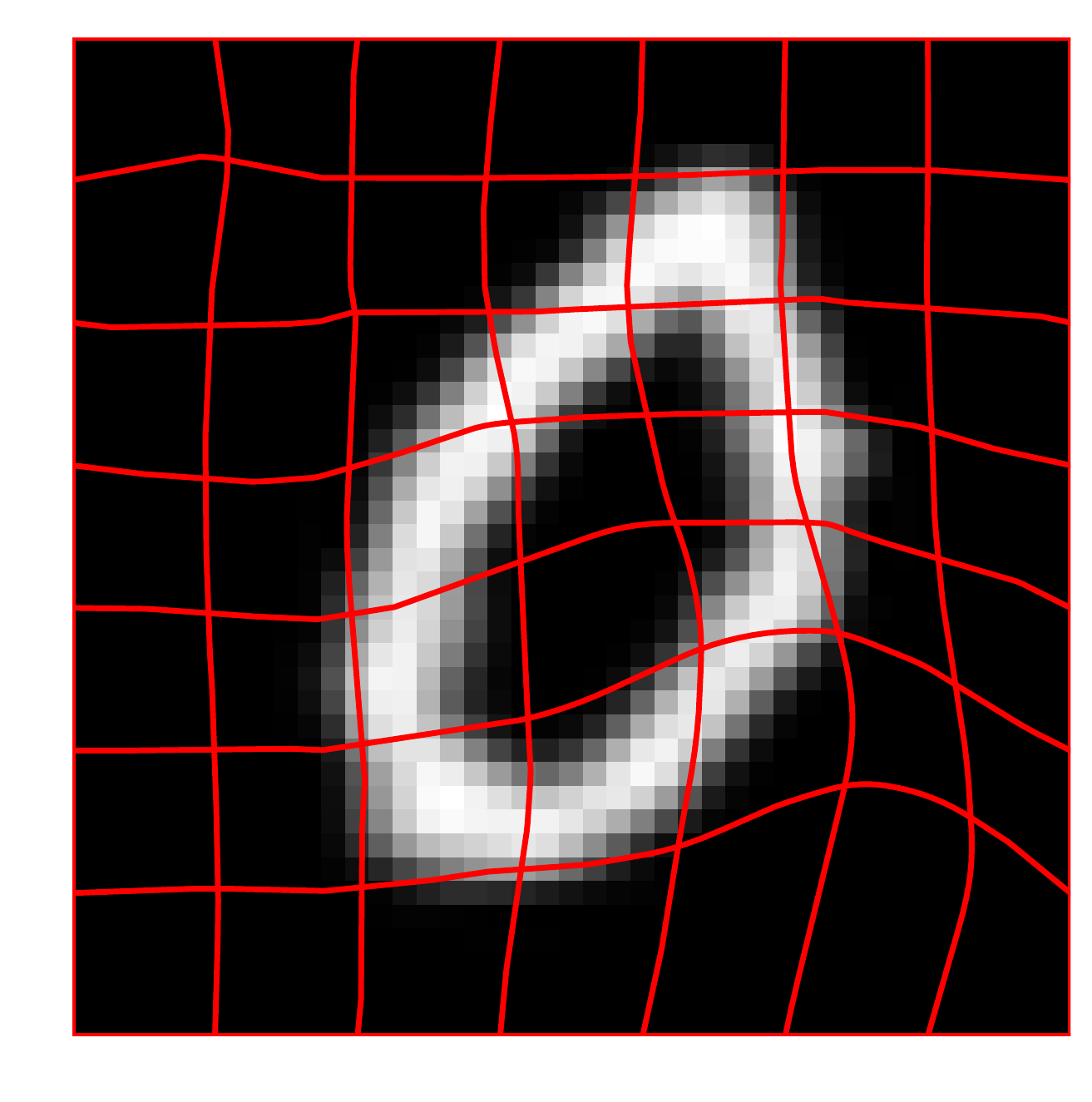}  \hspace{-3mm}
    \includegraphics[width=0.1\textwidth, clip=true, trim=4mm 4mm 4mm 4mm]{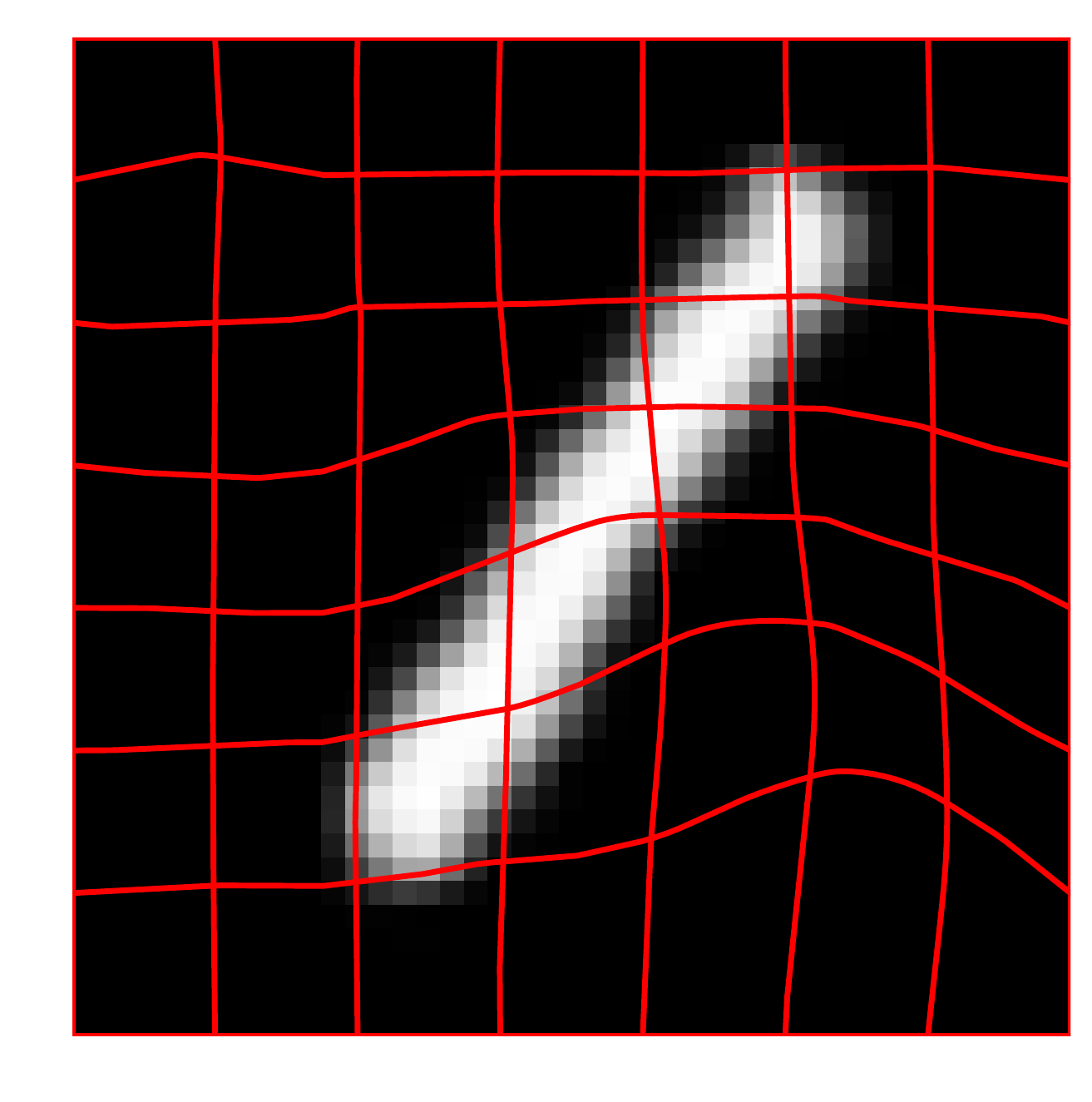}   \hspace{-3mm}
    \includegraphics[width=0.1\textwidth, clip=true, trim=4mm 4mm 4mm 4mm]{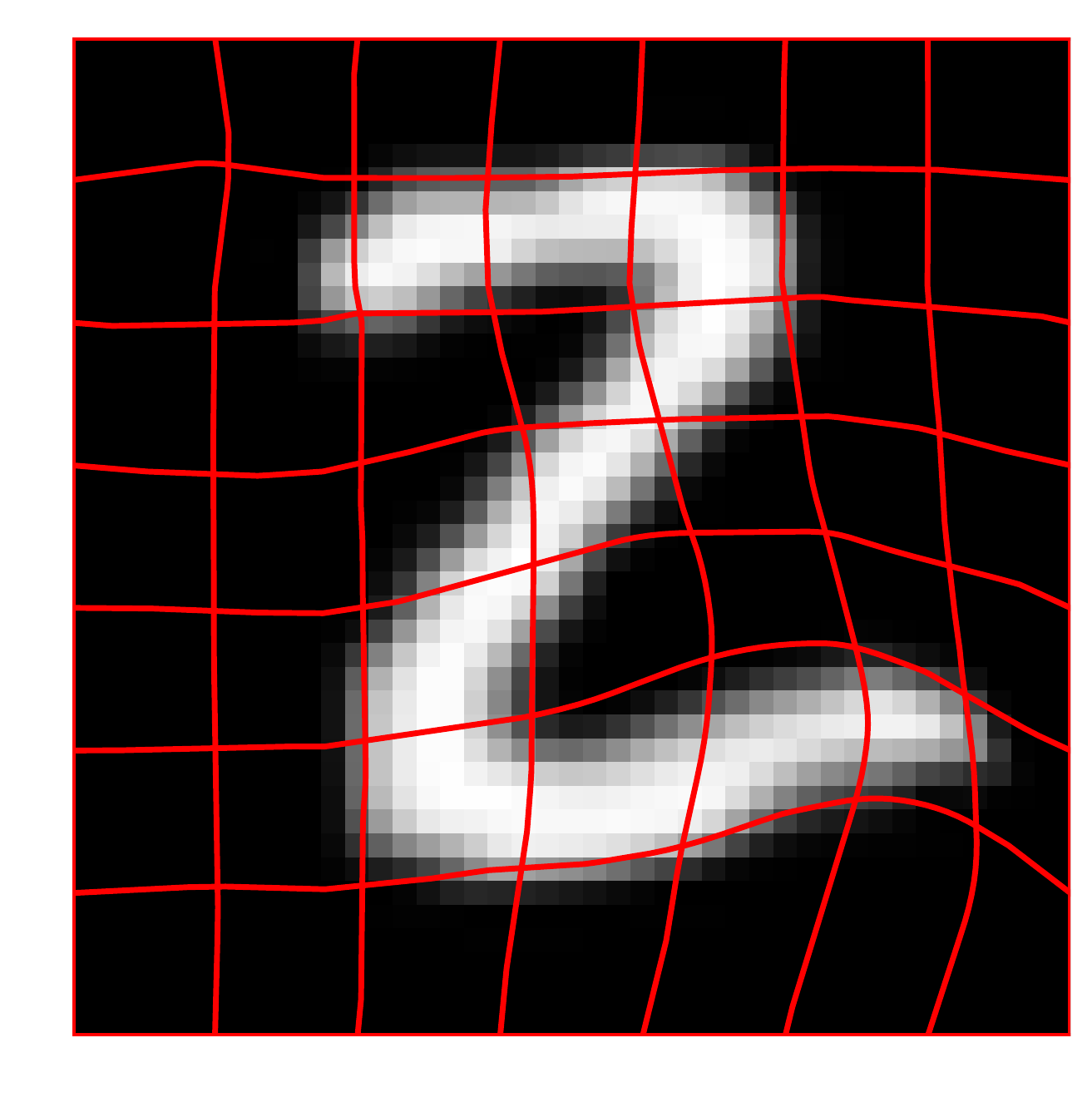}   \hspace{-3mm}
    \includegraphics[width=0.1\textwidth, clip=true, trim=4mm 4mm 4mm 4mm]{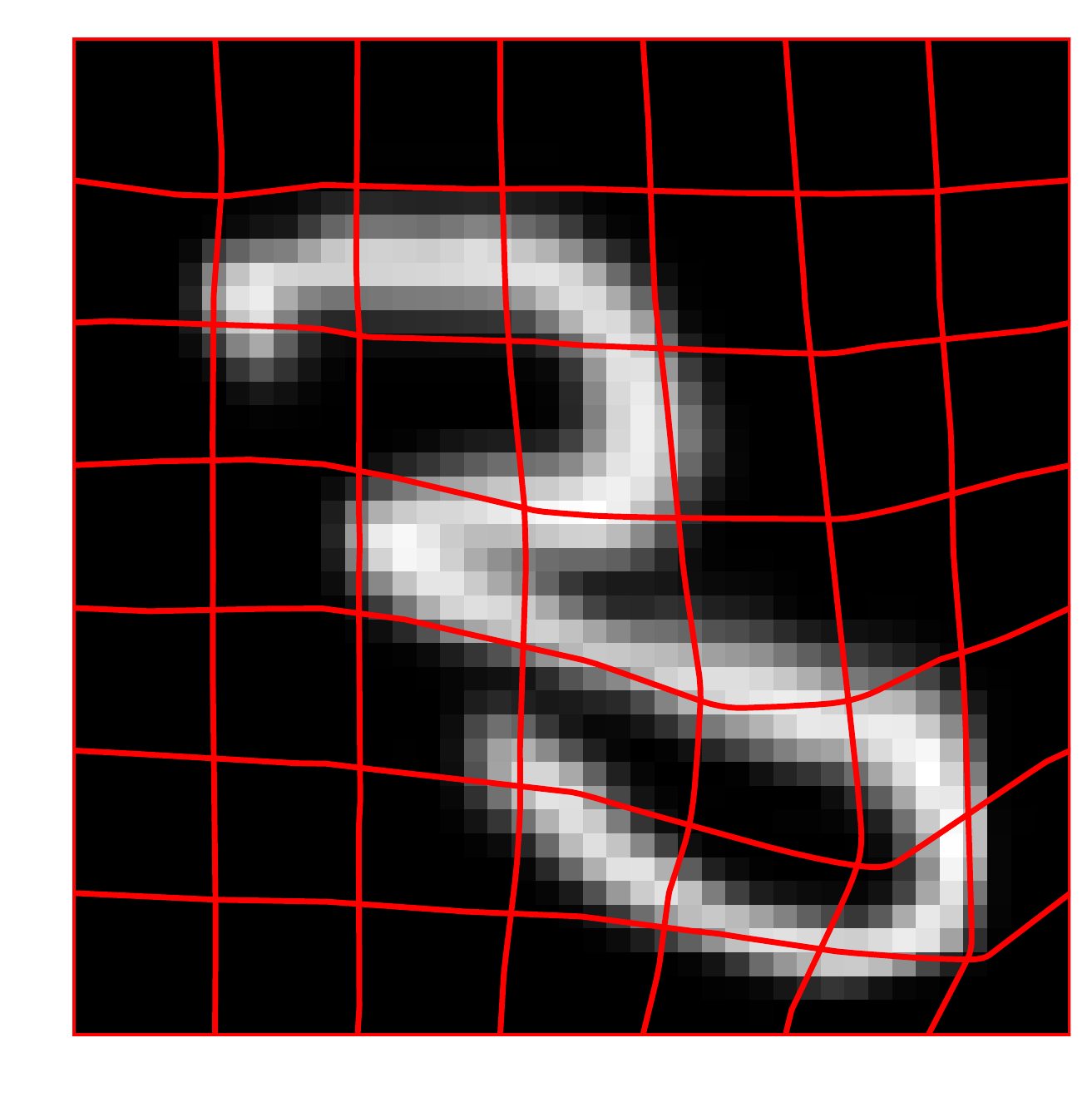} \hspace{-3mm}
    \includegraphics[width=0.1\textwidth, clip=true, trim=4mm 4mm 4mm 4mm]{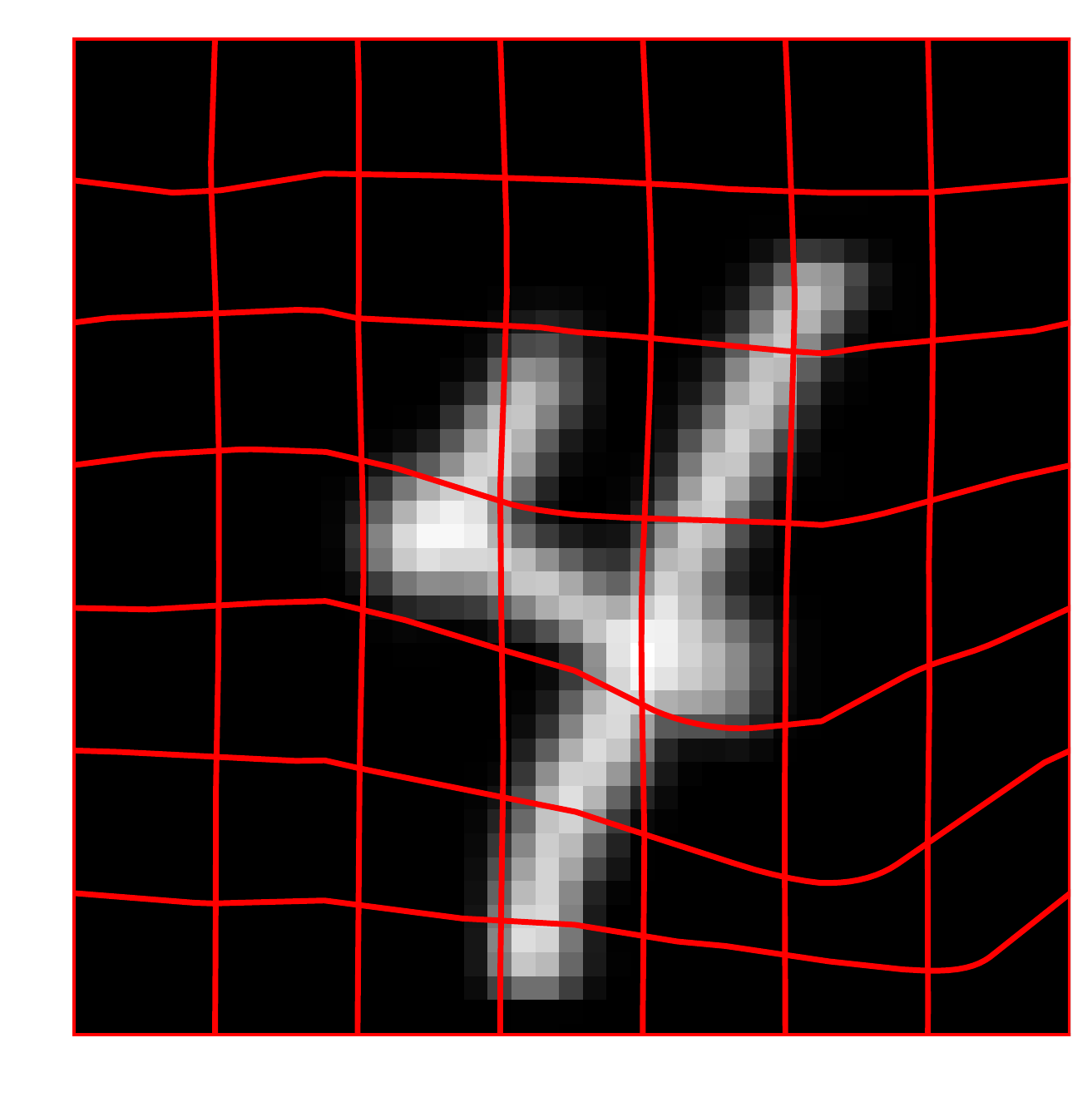}  \hspace{-3mm}
    \includegraphics[width=0.1\textwidth, clip=true, trim=4mm 4mm 4mm 4mm]{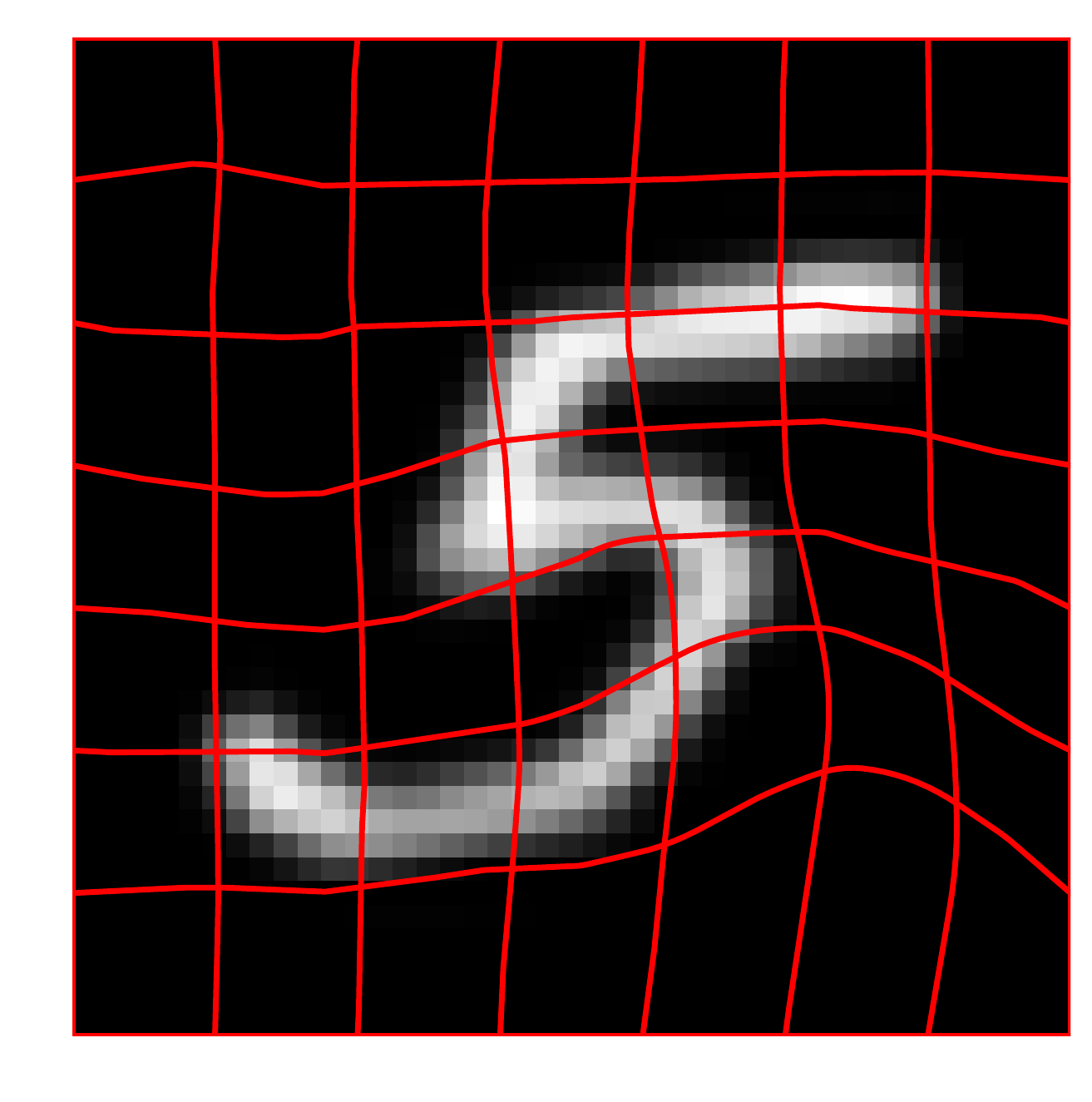}  \hspace{-3mm}
    \includegraphics[width=0.1\textwidth, clip=true, trim=4mm 4mm 4mm 4mm]{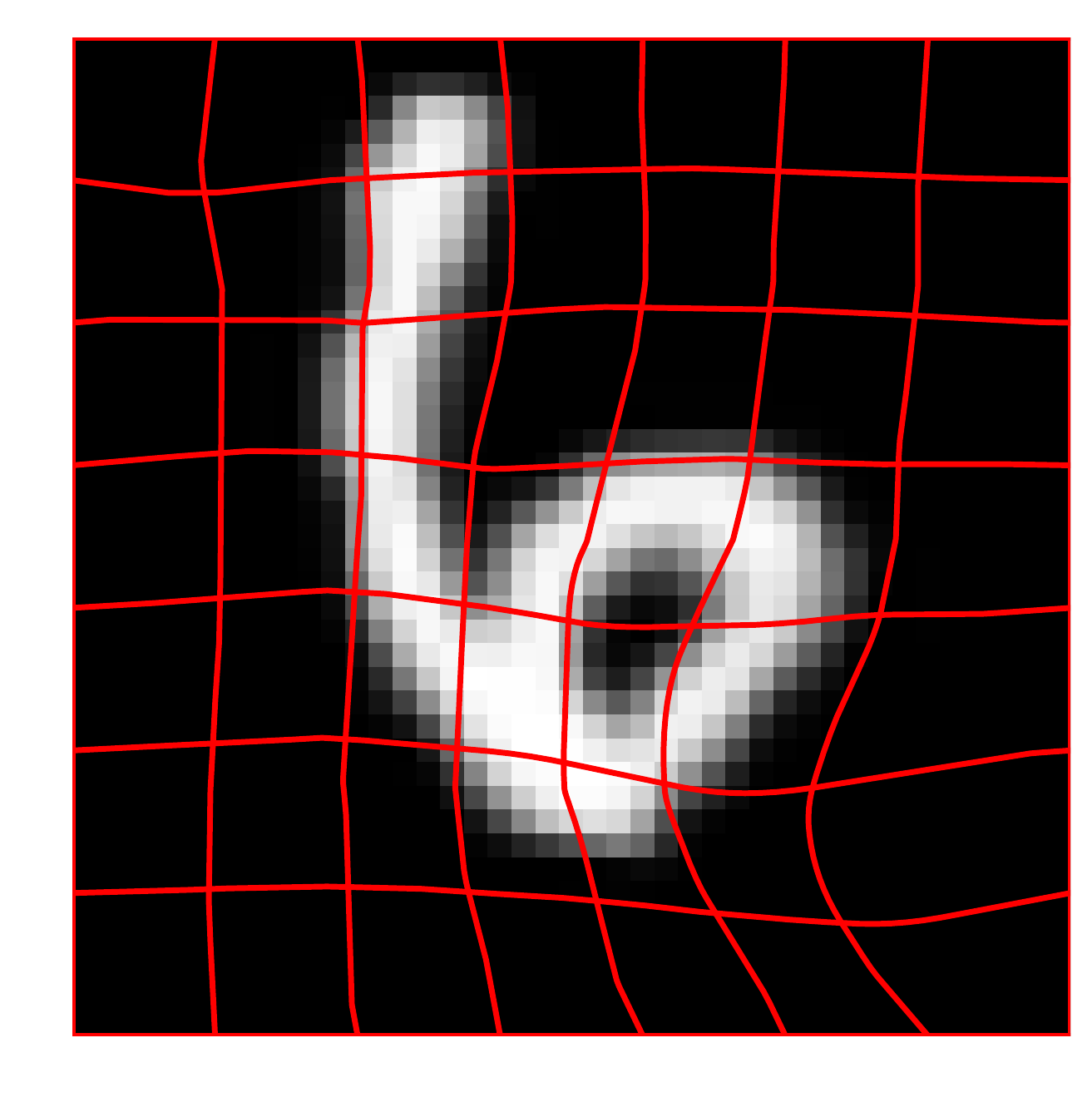}   \hspace{-3mm}
    \includegraphics[width=0.1\textwidth, clip=true, trim=4mm 4mm 4mm 4mm]{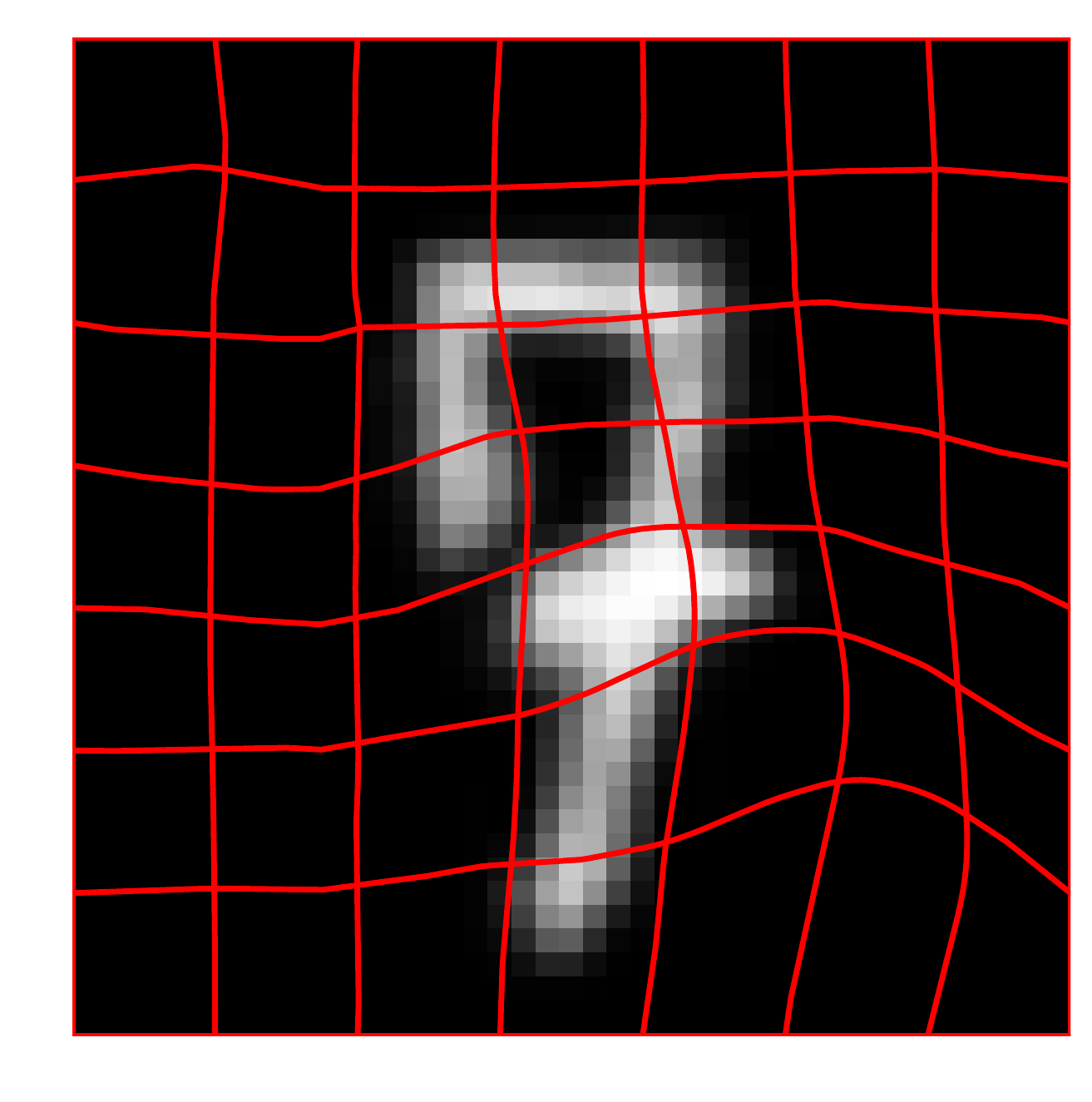} \hspace{-3mm}
    \includegraphics[width=0.1\textwidth, clip=true, trim=4mm 4mm 4mm 4mm]{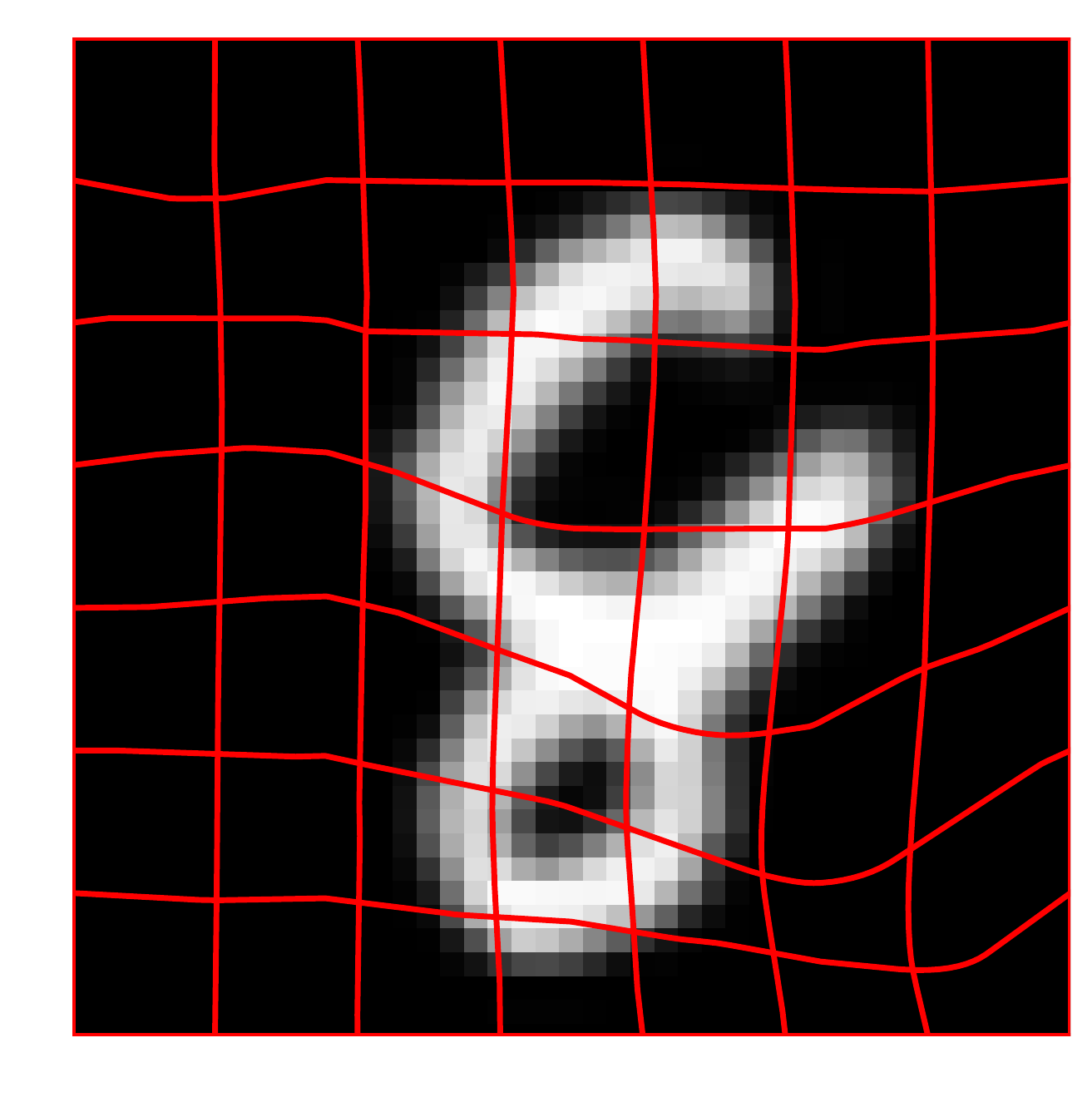} \hspace{-3mm}
    \includegraphics[width=0.1\textwidth, clip=true, trim=4mm 4mm 4mm 4mm]{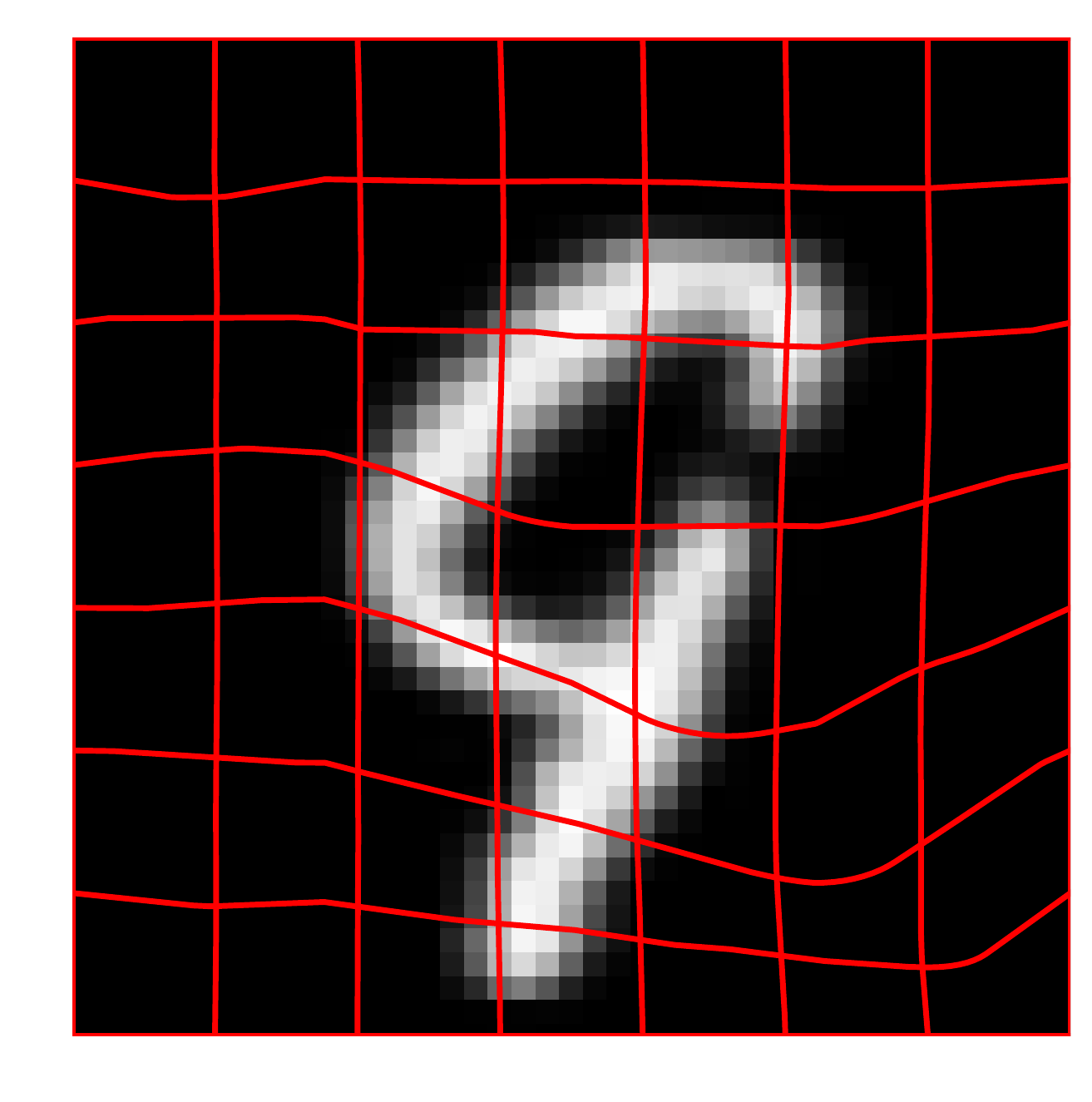}
    }
    \caption{The first principal component of the transformations of each class.
      \emph{Center row:} the mean (identity) transform.
      \emph{Top row:} the mean plus 3 standard devidations.
      \emph{Bottom row:} the mean minus 3 standard deviations.
      \textbf{See also supplementary animation and plots of further components.}}
    \label{fig:PCs}
  \end{figure*}

  \section{Experiments}
  We evaluate the learned augmentation scheme on the \emph{MNIST} data set. This allows
  us to compare with \emph{Infinite MNIST (InfiMNIST)} \cite{loosli:lskm:2007},
  which is currently the most extensive augmentation scheme available in the
  literature. The original \emph{MNIST} dataset consists of 60,000 training images and
  10,000 test images. We hold out 10,000 of the training images to form a validation
  set. The augmentations are, thus, based on only 50,000 training images.
  
  We form our augmented datasets as follows. For each of the 10 classes we sample
  500 images, and form the $K=5$ undirected nearest neighbor graph. For each edge
  we compute the transformation $T^{\btheta_{nm}}$ between the corresponding
  image pair. This gives, on average, 2940 transformations per class. We fit % in total we have 29404 transformations
  a zero-mean multivariate Gaussian to the tangential
  representations $\bv^{\btheta_{nm}}$ of the transformations, and generate new
  data as described in Sec.~\ref{sec:augmentation}. With this approach we generate two new training
  sets: The \emph{AlignMNIST} set is generated by uniformly sampling images from
  the 50,000-element training set and applying transformations sampled from $p(T^\btheta | y)$
  to generate 1,000,000 images per class. The \emph{AlignMNIST500} set is generated
  similarly except we only sample images from the set of 500 images from which
  transformations were estimated; again we sample 1,000,000 images per class.
  The \emph{AlignMNIST500} set is, thus, generated from only 500 images per class and
  allows us to experiment with the effect of learned augmentation in small datasets.
  For comparative purposes, we also sample 1,000,000 images per class from
  \emph{InfiMNIST} based on the 50,000-element training set; we further generate
  \emph{InfiMNIST500} by sampling from \emph{InfiMNIST} using only the same 500 images
  per class as for \emph{AlignMNIST500}. 
  
  As an initial experiment we consider a simple nearest-neighbor classifier on
  \emph{MNIST} (test error: $3.1\%$), \emph{InfiMNIST} (test error: $2.6\%$),
  and \emph{AlignMNIST} (test error: $1.4\%$). This gives a hint that the learned
  augmentation scheme captures invariances 
%   were not included
  that were missed
  in the laborious
  manual specification behind \emph{InfiMNIST}.
  
  Next, for each dataset we train a \emph{multilayer perceptron (MLP)} with hyperparameters
  estimated with cross-validation on 
%   the held-out validation set.
  the held-out validation set.
  The best set of hyperparameters is then used to train a network on the entire training set.
  For both \emph{InfiMNIST} and \emph{AlignMNIST}, we experience no overfitting problems due
  to the variety of the input samples. In fact, for both augmentation schemes,
  the network training converges even before the network has seen the entire
  dataset. Therefore, we train the final model without a scheme for early
  stopping by simply doubling the amount of weight updates used for validation.
  For both augmentation schemes, we achieve best performance using networks
  consisting of 3 hidden layers with 2048 units each.
  To speed up the training, we optimize using a stochastic gradient descent with
  momentum and employ rectified linear units as activation functions.

  \begin{figure}
    \centering
    \includegraphics[width=\columnwidth]{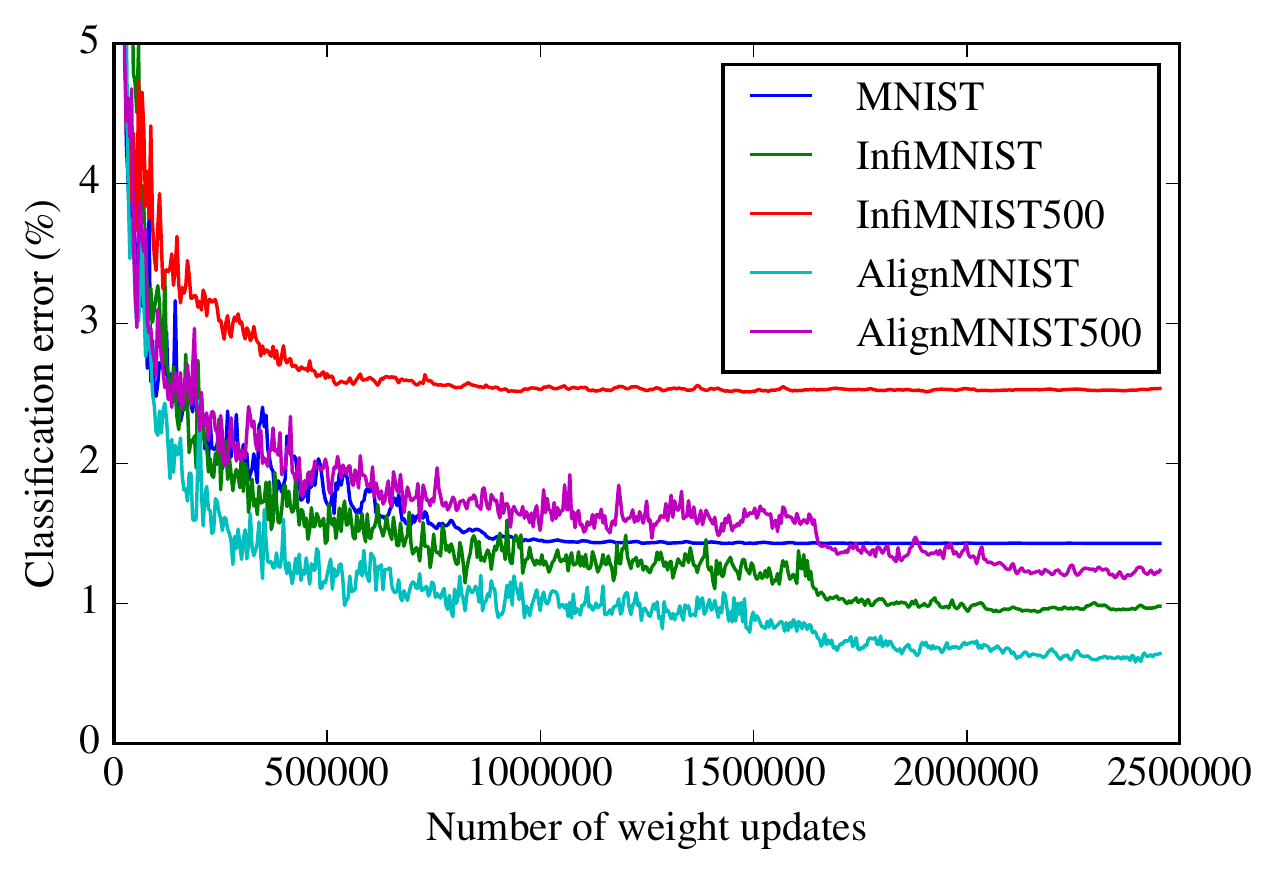}
    \caption{Learning curves for different training sets. The 
      classification error is calculated on the MNIST test set.}
    \label{fig:learn_curve}
  \end{figure}

In addition to the MLP we train a convolutional network (ConvNet) in a similar fashion on the datasets.
The ConvNet consists of 2 convolutional + pooling layers followed by a fully-connected hidden layer of 512 units.
To avoid overfitting, we rely on early stopping as determined on the validation set.

  Figure~\ref{fig:learn_curve} shows the learning curves of the MLP on the different
  datasets, while Fig.~\ref{fig:res} summarizes the results.
  It is evident that \emph{AlignMNIST} gives rise to the best predictive model,
  with a significant improvement over \emph{InfiMNIST}. This is clearer
  when we consider datasets generated from only 500 data points per class.
  The performance of the MLP trained on \emph{AlignMNIST500} is less than $0.2$
  percent-points worse than the MLP trained on the entire \emph{InfiMNIST},
  and $1.6$ percent-points better than the model trained on \emph{InfiMNIST500}.
  The additional performance gains attained by the ConvNets confirm that the
  benefits of our augmentation scheme generalize to networks with spatial structure.
%  This shows that a good augmentation scheme can make deep learning a viable option even
%  for fairly small datasets.

  Note that we are unable to reproduce the ConvNet results on \emph{InfiMNIST}
  previously reported by \cite{simard2003best}.
  Nonetheless, we believe that the ConvNet numbers are representative as the same
  implementation and evaluation scheme are used for all datasets.
  
  The reported results for \emph{InfiMNIST500} and \emph{AlignMNIST500} should be
  compared with the current state-of-the-art for MNIST classification based on
  small training sets. Amit et al. \cite{Amit:ijcv:2007} report an error rate of $1.5\%$
  when trained on 500 images per class. Both our MLP and our ConvNet significantly
  improve upon this, which demonstrate that a good augmentation scheme can make
  deep learning a viable option even for fairly small datasets.
  
  \begin{figure}
    \begin{tabular}{lcc}
                             & \textbf{MLP}           & \textbf{ConvNet} \\
      \textbf{Dataset}       & \textbf{Test error}    & \textbf{Test error} \\
      \hline
      \emph{MNIST}           & $1.42\% \pm 0.055$     & $0.65\% \pm 0.08\%$ \\ 
      \emph{InfiMNIST}       & $0.89\% \pm 0.079$     & $0.49\% \pm 0.04\%$ \\
      \emph{AlignMNIST}      & $0.58\% \pm 0.022$     & $0.44\% \pm 0.02\%$ \\ \hline
      \emph{InfiMNIST500}    & $2.59\% \pm 0.050$     & $1.04\% \pm 0.07\%$ \\
      \emph{AlignMNIST500}   & $1.06\% \pm 0.047$     & $0.84\% \pm 0.05\%$	
    \end{tabular}
    \caption{Final test error on the different datasets.}
    \label{fig:res}
  \end{figure}

%  \begin{figure}
%    \centering
%    \includegraphics[width=0.55\textwidth]{learn_curve}
%    \caption{Learning curves for different training sets. The classification error is calculated on the MNIST test set.}
%  \end{figure}
%
%\begin{table}
%  \centering
%  \begin{tabular}{lr}
%  \textbf{Dataset}       & \textbf{Test error}  \\
%  \hline
%    \emph{MNIST}         &                      \\ 
%    \emph{InfiMNIST}     & $0.891 \% \pm 0.079$ \\
%    \emph{InfiMNIST500}  & \\
%    \emph{AlignMNIST}    & $0.575 \% \pm 0.022$ \\
%    \emph{AlignMNIST500} & 
%  \end{tabular}
%  \caption{bl abla}
%\end{table}

%To evaluate our data augmentation scheme, we use the MNIST dataset consisting of 60,000 training images and 10,000 test images.
%10,000 of the training images are held out for validation to tune network hyperparameters.
%The best set of hyperparameters are then used to train a network on the entire training set.
%For both InfiMNIST and AlignMNIST, we experience no overfitting problems due to the variety of the input samples.
%In fact, for both augmentation schemes, the network training seems to converge even before the network has seen the entire dataset.
%Therefore, we train the final model without a scheme for early stopping by simply doubling the amount of weight updates used for validation.
%
%For both augmentation schemes, we achieve best performance using networks consisting of 3 hidden layers with 2048 units each.
%To speed up the training, we optimize using stochastic gradient descent with momentum and employ rectified linear units as activation function.

  \section{Discussion}\label{sec:discussion}
  Our basic idea is simple: \emph{instead of manually specifying data augmentation
  schemes, we build a statistical model of the transformations found within a
  given class, and use this to augment the dataset}. The practical implementation
  of this idea, however, requires some care. A naive approach would build a statistical model 
  over dense displacement fields. This would, however,
  be a very high-dimensional model that would allow for too much flexibility.
  Such an approach would also require aligning substantially more image pairs,
  which would increase computational demands.
  Constraining the transformations to lie in a finite-dimensional space of diffeomorphisms 
  not only drastically lowers dimensionality but also ensures that samples from the probabilistic
  model are well-behaved transformations. 
  As the first step of the proposed method involves solving multiple problems of inference over latent diffeomorphisms,
  we crucially depend on the availability of an efficient-yet-highly-expressive representation of diffeomorphisms;
  fortunately, this is an active area of research with several recent successes
  \cite{freifeld2015transform, zhang:ipmi:2015, Arsigny:BIR:2006}.
  
  We find that a learned augmentation scheme allows for significantly smaller
  training sets. In particular, we report state-of-the-art results on small
  subsamples of MNIST. This observation potentially allows very large models to be
  trained on fairly small datasets; we observe that the learned augmentation
  scheme using a fraction of the training data not only outperforms a classifier
  trained on the entire dataset but also does significantly better than manually-specified augmentations.
  As there are $\mathcal{O}(N_y^2)$ potential pairs to be aligned in a class of $N_y$
  observations, it is generally possible to get access to enough transformations
  to learn a good augmentation scheme.
  A further benefit of learned augmentation schemes is that different schemes
  may be applied within different classes --- this is generally impractical
  to do in manually-designed schemes. 
  
  A limitation of our approach is that we must be able to align observations
  in order to build statistical models of the deformations found within the dataset.
  We exemplify our idea on the MNIST data where observations often have well-defined
  alignments. Our approach is, however, not limited to MNIST:
  \begin{itemize}
    \item Image alignment is a routine task in many medical imaging tasks, such as the analysis
      of \emph{magnetic resonance images (MRI)} \cite{darkner:pami:2013, miller2004computational, beg:ijcv:2005},
      \emph{X-ray Computed Tomography (CT)} \cite{Rueckert:2011, mattes2003pet},
      \emph{Positron Emission Tomography (PET)} \cite{studholme1997automated, studholme1995multiresolution} and
      \emph{mammograms} \cite{hipwell2007new, nielsen:jmiv:2008}.
      Our work directly extends to these domains.
    \item We make similar observations for time-series data such as
      acoustic signals~\cite{cui:icassp:2014, hinton2012deep}. Here
      \emph{dynamic time warping (DTW)} \cite{Bellman:DTW:1959} is often used as pre-processing
      to remove differences in the temporal speed of individual signals.
      The CPAB representation readily provides a replacement for DTW and our approach
      can be used to augment the datasets. It is worth noting that DTW itself is poorly-suited
      for such a task as a generative statistical model of deformations learned from estimated DTWs will not
      yield new invertible transformations even if all estimated DTWs happen to be invertible.
      We avoid such issues by relying on the CPAB representation, which ensures
      diffeomorphic deformations.
    \item \emph{Mesh alignment} is also standard pre-processing step in the analysis
      of three-dimensional meshes \cite{bogo:cvpr:2014, hirshberg:eccv:2012, anguelov2005scape, allen2002articulated}.
      As deep models are beginning
      to appear for three-dimensional data~\cite{wu:cvpr:2015} it would be interesting
      to combine them with learned augmentation schemes.
  \end{itemize}

  In summary, the main contribution of this paper is the first
  approach for learning data augmentation schemes. Our results show that
  it is beneficial to learn statistical models of transformations over manually
  specifying them. We do not find it surprising that \emph{learning} beats
  \emph{hand-crafting}.

  \subsubsection*{Acknowledgements}
  S.H.\ is funded by the Danish Council for Independent Research, Natural Sciences.  
  O.F.\ and J.W.F.\ are partially supported by U.S.\ Office of Naval Research MURI program, award N000141110688,
  and VITALITE, which receives support from U.S. Army Research Office MURI, award W911NF-11-1-0391.
  
  %\newpage
  \small{
    \bibliographystyle{abbrv}
    \bibliography{paper}

\begin{thebibliography}{10}

\bibitem{allen2002articulated}
B.~Allen, B.~Curless, and Z.~Popovi{\'c}.
\newblock Articulated body deformation from range scan data.
\newblock In {\em ACM Transactions on Graphics (TOG)}, volume~21, pages
  612--619. ACM, 2002.

\bibitem{Amit:ijcv:2007}
Y.~Amit and A.~Trouv{\'e}.
\newblock Pop: Patchwork of parts models for object recognition.
\newblock {\em International Journal of Computer Vision (IJCV)},
  75(2):267--282, Nov. 2007.

\bibitem{anguelov2005scape}
D.~Anguelov, P.~Srinivasan, D.~Koller, S.~Thrun, J.~Rodgers, and J.~Davis.
\newblock Scape: shape completion and animation of people.
\newblock In {\em ACM Transactions on Graphics (TOG)}, volume~24, pages
  408--416. ACM, 2005.

\bibitem{arnold1989mathematical}
V.~Arnold.
\newblock {\em Mathematical methods of classical mechanics}.
\newblock Springer Science \& Business Media, 1989.

\bibitem{Arsigny:BIR:2006}
V.~Arsigny, O.~Commowick, X.~Pennec, and N.~Ayache.
\newblock A log-euclidean polyaffine framework for locally rigid or affine
  registration.
\newblock In {\em BIR}. Springer, 2006.

\bibitem{baird1992document}
H.~S. Baird.
\newblock Document image defect models.
\newblock In {\em SDIA}, pages 546--556. Springer, 1992.

\bibitem{beg:ijcv:2005}
M.~Beg, M.~Miller, A.~Trouvé, and L.~Younes.
\newblock Computing large deformation metric mappings via geodesic flows of
  diffeomorphisms.
\newblock {\em International Journal of Computer Vision}, 61(2):139--157, 2005.

\bibitem{Bellman:DTW:1959}
R.~Bellman and R.~Kalaba.
\newblock On adaptive control processes.
\newblock {\em Automatic Control, IRE Transactions on}, 4(2):1--9, Nov 1959.

\bibitem{bogo:cvpr:2014}
F.~Bogo, J.~Romero, M.~Loper, and M.~J. Black.
\newblock Faust: Dataset and evaluation for 3d mesh registration.
\newblock In {\em Computer Vision and Pattern Recognition (CVPR), 2014 IEEE
  Conference on}, pages 3794--3801. IEEE, 2014.

\bibitem{chapelle2006semi}
O.~Chapelle et~al.
\newblock Semi-supervised learning.
\newblock 2006.

\bibitem{chapelle:nips:2001}
O.~Chapelle and B.~Sch{\"o}lkopf.
\newblock Incorporating invariances in non-linear support vector machines.
\newblock In {\em NIPS}, pages 609--616, 2001.

\bibitem{cui:icassp:2014}
X.~Cui, V.~Goel, and B.~Kingsbury.
\newblock Data augmentation for deep neural network acoustic modeling.
\newblock In {\em ICASSP}, pages 5582--5586, May 2014.

\bibitem{darkner:pami:2013}
S.~Darkner and J.~Sporring.
\newblock Locally orderless registration.
\newblock {\em Pattern Analysis and Machine Intelligence, IEEE Transactions
  on}, 35(6):1437--1450, June 2013.

\bibitem{dosovitskiy2013unsupervised}
A.~Dosovitskiy, J.~T. Springenberg, and T.~Brox.
\newblock Unsupervised feature learning by augmenting single images.
\newblock {\em arXiv preprint arXiv:1312.5242}, 2013.

\bibitem{duchenne2011graph}
O.~Duchenne, A.~Joulin, and J.~Ponce.
\newblock A graph-matching kernel for object categorization.
\newblock In {\em Computer Vision (ICCV), 2011 IEEE International Conference
  on}, pages 1792--1799. IEEE, 2011.

\bibitem{NIPS2014_5539}
D.~Eigen, C.~Puhrsch, and R.~Fergus.
\newblock Depth map prediction from a single image using a multi-scale deep
  network.
\newblock In Z.~Ghahramani, M.~Welling, C.~Cortes, N.~Lawrence, and
  K.~Weinberger, editors, {\em Advances in Neural Information Processing
  Systems 27}, pages 2366--2374. Curran Associates, Inc., 2014.

\bibitem{farabet:pami:2013}
C.~Farabet, C.~Couprie, L.~Najman, and Y.~LeCun.
\newblock Learning hierarchical features for scene labeling.
\newblock {\em IEEE Transactions on Pattern Analysis and Machine Intelligence},
  35(8):1915--1929, 2013.

\bibitem{freifeld2015transform}
O.~Freifeld, S.~Hauberg, K.~Batmanghelich, and J.~W.~F. III.
\newblock Highly-expressive spaces of well-behaved transformations: Keeping it
  simple.
\newblock In {\em International Conference on Computer Vision (ICCV)},
  Santiago, Chile, Dec. 2015.

\bibitem{goodfellow2009measuring}
I.~Goodfellow, H.~Lee, Q.~V. Le, A.~Saxe, and A.~Y. Ng.
\newblock Measuring invariances in deep networks.
\newblock In {\em NIPS}, pages 646--654, 2009.

\bibitem{graepel:nips:2004}
T.~Graepel and R.~Herbrich.
\newblock Invariant pattern recognition by semi-definite programming machines.
\newblock In S.~Thrun, L.~Saul, and B.~Sch\"{o}lkopf, editors, {\em NIPS},
  pages 33--40. MIT Press, 2004.

\bibitem{grenander:book:1993}
U.~Grenander.
\newblock {\em General pattern theory: A mathematical study of regular
  structures}.
\newblock Clarendon Press, 1993.

\bibitem{hariharan2014detecting}
B.~Hariharan, C.~L. Zitnick, and P.~Doll{\'a}r.
\newblock Detecting objects using deformation dictionaries.
\newblock In {\em Computer Vision and Pattern Recognition (CVPR), 2014 IEEE
  Conference on}, pages 1995--2002. IEEE, 2014.

\bibitem{hinton2012deep}
G.~Hinton, L.~Deng, D.~Yu, G.~E. Dahl, A.-r. Mohamed, N.~Jaitly, A.~Senior,
  V.~Vanhoucke, P.~Nguyen, T.~N. Sainath, and B.~Kingsbury.
\newblock Deep neural networks for acoustic modeling in speech recognition: The
  shared views of four research groups.
\newblock {\em IEEE Signal Processing Magazine}, 29(6):82--97, 2012.

\bibitem{hipwell2007new}
J.~H. Hipwell, C.~Tanner, W.~R. Crum, J.~Schnabel, D.~J. Hawkes, et~al.
\newblock A new validation method for x-ray mammogram registration algorithms
  using a projection model of breast x-ray compression.
\newblock {\em Medical Imaging, IEEE Transactions on}, 26(9):1190--1200, 2007.

\bibitem{hirshberg:eccv:2012}
D.~A. Hirshberg, M.~Loper, E.~Rachlin, and M.~J. Black.
\newblock Coregistration: Simultaneous alignment and modeling of articulated 3d
  shape.
\newblock In {\em Computer Vision--ECCV 2012}, pages 242--255. Springer, 2012.

\bibitem{Hoyle:2015ada}
B.~Hoyle, M.~M. Rau, C.~Bonnett, S.~Seitz, and J.~Weller.
\newblock {Data augmentation for machine learning redshifts applied to SDSS
  galaxies}.
\newblock {\em Monthly Notices of the Royal Astronomical Society}, 450:305,
  2015.

\bibitem{jaitly2013vocal}
N.~Jaitly and G.~E. Hinton.
\newblock Vocal tract length perturbation ({VTLP}) improves speech recognition.
\newblock In {\em Proc. ICML Workshop on Deep Learning for Audio, Speech and
  Language}, 2013.

\bibitem{kanazawa:arxiv:2014}
A.~Kanazawa, A.~Sharma, and D.~Jacobs.
\newblock Locally scale-invariant convolutional neural networks.
\newblock {\em arXiv preprint arXiv:1412.5104}, 2014.

\bibitem{kivinen:icann:2011}
J.~J. Kivinen and C.~K. Williams.
\newblock Transformation equivariant boltzmann machines.
\newblock In {\em {ICANN}}, pages 1--9. Springer, 2011.

\bibitem{HintonImageNet2012}
A.~Krizhevsky, I.~Sutskever, and G.~E. Hinton.
\newblock Imagenet classification with deep convolutional neural networks.
\newblock In F.~Pereira, C.~Burges, L.~Bottou, and K.~Weinberger, editors, {\em
  NIPS}, pages 1097--1105. Curran Associates, Inc., 2012.

\bibitem{liao:nips:2013}
Q.~Liao, J.~Z. Leibo, and T.~Poggio.
\newblock {Learning invariant representations and applications to face
  verification}.
\newblock In {\em NIPS}, 2013.

\bibitem{loosli:lskm:2007}
G.~Loosli, S.~Canu, and L.~Bottou.
\newblock Training invariant support vector machines using selective sampling.
\newblock {\em Large scale kernel machines}, pages 301--320, 2007.

\bibitem{lu2006enhancing}
X.~Lu, B.~Zheng, A.~Velivelli, and C.~Zhai.
\newblock Enhancing text categorization with semantic-enriched representation
  and training data augmentation.
\newblock {\em Journal of the American Medical Informatics Association},
  13(5):526--535, 2006.

\bibitem{mattes2003pet}
D.~Mattes, D.~R. Haynor, H.~Vesselle, T.~K. Lewellen, and W.~Eubank.
\newblock Pet-ct image registration in the chest using free-form deformations.
\newblock {\em Medical Imaging, IEEE Transactions on}, 22(1):120--128, 2003.

\bibitem{metropolis1953equation}
N.~Metropolis, A.~W. Rosenbluth, M.~N. Rosenbluth, A.~H. Teller, and E.~Teller.
\newblock Equation of state calculations by fast computing machines.
\newblock {\em The journal of chemical physics}, 21(6):1087--1092, 1953.

\bibitem{miller:cvpr:2003}
E.~G. Miller and C.~Chefd'Hotel.
\newblock Practical non-parametric density estimation on a transformation group
  for vision.
\newblock In {\em CVPR}, volume~2, pages II--114, 2003.

\bibitem{miller2004computational}
M.~I. Miller.
\newblock Computational anatomy: shape, growth, and atrophy comparison via
  diffeomorphisms.
\newblock {\em NeuroImage}, 23:S19--S33, 2004.

\bibitem{nielsen:jmiv:2008}
M.~Nielsen, P.~Johansen, A.~Jackson, B.~Lautrup, and S.~Hauberg.
\newblock Brownian warps for non-rigid registration.
\newblock {\em Journal of Mathematical Imaging and Vision}, 31:221--231, 2008.

\bibitem{pennec:jmiv:2006}
X.~Pennec.
\newblock {Intrinsic statistics on Riemannian manifolds: Basic tools for
  geometric measurements}.
\newblock {\em Journal of Mathematical Imaging and Vision}, 25(1):127--154,
  2006.

\bibitem{Robert:Book:MCMC:2004}
C.~P. Robert and G.~Casella.
\newblock {\em Monte Carlo statistical methods}.
\newblock Citeseer, 2004.

\bibitem{Rueckert:2011}
D.~Rueckert and J.~Schnabel.
\newblock Medical image registration.
\newblock In T.~M. Deserno, editor, {\em Biomedical Image Processing},
  Biological and Medical Physics, Biomedical Engineering, pages 131--154.
  Springer Berlin Heidelberg, 2011.

\bibitem{simard1992tangent}
P.~Simard, B.~Victorri, Y.~LeCun, and J.~S. Denker.
\newblock Tangent prop -- a formalism for specifying selected invariances in an
  adaptive network.
\newblock In {\em NIPS}, pages 895--903, 1992.

\bibitem{simard2003best}
P.~Y. Simard, D.~Steinkraus, and J.~C. Platt.
\newblock Best practices for convolutional neural networks applied to visual
  document analysis.
\newblock In {\em 2013 12th International Conference on Document Analysis and
  Recognition}, volume~2, pages 958--958. IEEE Computer Society, 2003.

\bibitem{sohn:icml:2012}
K.~Sohn and H.~Lee.
\newblock Learning invariant representations with local transformations.
\newblock In {\em {ICML}}, pages 1311--1318, 2012.

\bibitem{studholme1997automated}
C.~Studholme, D.~Hill, and D.~Hawkes.
\newblock Automated 3d registration of mr and pet brain images by
  multi-resolution optimisation of voxel similarity measures.
\newblock {\em Med. Phys}, 24(1):25--35, 1997.

\bibitem{studholme1995multiresolution}
C.~Studholme, D.~L. Hill, and D.~J. Hawkes.
\newblock Multiresolution voxel similarity measures for mr-pet registration.
\newblock In {\em Information processing in medical imaging}, volume~3, pages
  287--298. Dordrecht, The Netherlands: Kluwer, 1995.

\bibitem{viola1997alignment}
P.~Viola and W.~M. Wells~III.
\newblock Alignment by maximization of mutual information.
\newblock {\em International journal of computer vision}, 24(2):137--154, 1997.

\bibitem{winn2005locus}
J.~Winn and N.~Jojic.
\newblock Locus: Learning object classes with unsupervised segmentation.
\newblock In {\em Computer Vision, 2005. ICCV 2005. Tenth IEEE International
  Conference on}, volume~1, pages 756--763. IEEE, 2005.

\bibitem{wu:cvpr:2015}
Z.~Wu, S.~Song, A.~Khosla, F.~Yu, L.~Zhang, X.~Tang, and J.~Xiao.
\newblock {3D} shapenets: A deep representation for volumetric shape modeling.
\newblock In {\em IEEE Conference on Computer Vision and Pattern Recognition
  (CVPR)}, 2015.

\bibitem{zhang:ipmi:2015}
M.~Zhang and P.~T. Fletcher.
\newblock Finite-dimensional {L}ie algebras for fast diffeomorphic image
  registration.
\newblock In {\em {IPMI}}, 2015.

\end{thebibliography}
  }
  
\end{document}